\documentclass[11pt]{article}
\usepackage[margin=1.1in]{geometry}
\usepackage{amsmath,amssymb,amsthm,mathtools}
\usepackage{booktabs}
\usepackage{enumitem}

\newtheorem{theorem}{Theorem}[section]
\newtheorem{lemma}[theorem]{Lemma}
\newtheorem{proposition}[theorem]{Proposition}
\newtheorem{corollary}[theorem]{Corollary}
\newtheorem{definition}[theorem]{Definition}
\newtheorem{remark}[theorem]{Remark}
\newtheorem{example}[theorem]{Example}

\newtheorem{conjecture}[theorem]{Research Template}

\newcommand{\cX}{\mathcal X}
\newcommand{\cY}{\mathcal Y}
\newcommand{\cH}{\mathcal H}
\newcommand{\cD}{\mathcal D}
\newcommand{\cG}{\mathcal G}
\newcommand{\cE}{\mathcal E}
\newcommand{\BDS}{\operatorname{BDS}}
\newcommand{\aBDS}{\operatorname{aBDS}}
\newcommand{\DS}{\operatorname{DS}}
\newcommand{\wid}{\operatorname{R}}
\newcommand{\mB}{\mathrm{m}^{\mathrm B}}
\newcommand{\err}{\operatorname{err}}
\newcommand{\F}{\mathbb F}
\newcommand{\eps}{\varepsilon}
\newcommand{\ind}[1]{\mathbf 1\{#1\}}

\title{Bandit Multiclass PAC Learning:\\
Corrected Lower Bounds, Exact Families,\\
and a Confidence Direct-Sum Phenomenon}
\author{Guangjian Zhang\\ \texttt{zgj1226029469@outlook.com}}
\date{August 31, 2026}

\begin{document}
\maketitle

\begin{abstract}
We study realizable multiclass PAC learning with bandit feedback: on each round the learner observes an i.i.d.\ instance, predicts one of $K$ labels, and observes only whether the prediction was correct. Hanneke, Meng, Moran, and Shaeiri (arXiv:2605.25678) characterized the optimal sample complexity of this problem up to logarithmic factors via the bandit DS dimension $\BDS(\cH)$, proving an upper bound of $O\bigl((\BDS(\cH)\log^3K+K\log K\log(1/\delta))/\eps\bigr)$, an accompanying lower bound of $\Omega\bigl((\BDS(\cH)+\log(1/\delta))/\eps\bigr)$, and asking whether every class admits sample complexity $O\bigl((\BDS(\cH)+\log(1/\delta))/\eps\bigr)$.

We give a fine-grained resolution of this landscape. First, we show that the published lower bound is \emph{incorrect as stated}: we exhibit explicit classes---including one satisfying the common-anchor property implicitly used in its proof---with $\BDS(\cH)=K-1$ whose sample complexity is exponentially smaller than $\BDS(\cH)/\eps$, and we locate two independent gaps in the published proof. We then repair the lower-bound theory around a new \emph{anchored} dimension $\aBDS(\cH)\le\BDS(\cH)$, proving the constant-free three-part bound $\mB_\cH(\eps,\delta)=\Omega\bigl(\aBDS(\cH)/\eps+\ind{\aBDS>0}\log(1/\delta)/\eps+\wid(\cH)-1\bigr)$, where $\wid(\cH)\le\BDS(\cH)+1$ is the pointwise label width. On the upper-bound side we remove the ambient label count $K$ entirely, proving $\mB_\cH(\eps,\delta)=O\bigl((B\log^3B+B\log(1/\delta))/\eps\bigr)$ for $B=\BDS(\cH)$, and, via a new fiberization lemma ($\BDS(\cH_{x=y})\le\aBDS(\cH)$), the constant-confidence bound $O\bigl(\wid+ (\aBDS+1)\log^3(\aBDS+2)/\eps\bigr)$. For two natural families---classes with $\aBDS=0$, and a family of \emph{anchored product} classes that includes all our counterexamples---we determine the sample complexity up to constant factors, with no logarithmic slack.

Finally, we answer the main open question in the negative under its uniform-constant reading, and prove that this failure is intrinsic: for an explicit \emph{affine multiplexer} class we establish the full confidence profile $\mB(\eps,\delta)=\Theta\bigl((n\min\{n,\log(1/\delta)\}+\log(1/\delta))/\eps\bigr)$, exhibiting a \emph{confidence direct-sum} regime in which the multiplicative $\wid\cdot\log(1/\delta)$ cost of the known cascade algorithm is information-theoretically necessary, followed by a rank-saturation phase transition. Moreover, we construct two classes with identical $(\aBDS,\BDS,\wid)$ whose sample complexities differ polynomially, so \emph{no} characterization based on these dimensions alone can be accurate to polylogarithmic factors. We also explain in detail why these impossibility results are consistent with (even claimed additive-confidence) list-PAC learning guarantees combined with the ListCascade bridge: the bridge necessarily pays, per stage, exactly the exploration-times-confidence cost that our lower bounds certify.
\end{abstract}

\section{Introduction}\label{sec:intro}

In multiclass PAC learning with \emph{bandit feedback}, a learner faces an unknown distribution $\cD$ over an instance space $\cX$ and an unknown target concept $h^\star$ from a known class $\cH\subseteq\cY^\cX$ with label alphabet $\cY=[K]$. On each round $t$ the learner receives $x_t\sim\cD$ i.i.d., predicts a label $\hat y_t\in\cY$ (possibly randomized, based on the entire history), and observes only the bit $\ind{\hat y_t=h^\star(x_t)}$; an incorrect guess does \emph{not} reveal the correct label. After $m$ rounds the learner outputs a (possibly improper) classifier $\hat h$, and the goal is $\Pr[\err_\cD(\hat h)\le\eps]\ge1-\delta$ in the \emph{realizable} setting ($\err_\cD(h^\star)=0$). We write $\mB_\cH(\eps,\delta)$ for the optimal sample complexity. The model goes back to Daniely et al.~\cite{DanielyEtAl2011}, who observed that the classical reduction (guess uniformly, pay a factor $K$) yields $\widetilde O(K\cdot\operatorname{ND}(\cH)/\eps)$ and asked for its true price.

In a recent breakthrough, Hanneke, Meng, Moran, and Shaeiri~\cite{HMMS2026} (henceforth HMMS) introduced the \emph{bandit DS dimension} $\BDS(\cH)$, a pseudo-box refinement of the DS dimension \cite{DanielyShalev14,BrukhimEtAl22}, and proved
\begin{equation}\label{eq:hmms}
\mB_\cH(\eps,\delta)
= O\!\Bigl(\frac{\BDS(\cH)\,(\log K)^3 + K\log K\log(1/\delta)}{\eps}\Bigr),
\qquad
\mB_\cH(\eps,\delta)
\overset{?}{=}\Omega\!\Bigl(\frac{\BDS(\cH)+\log(1/\delta)}{\eps}\Bigr),
\end{equation}
the upper bound via a new cascade of list learners (\emph{ListCascade}), the lower bound as their Theorem~4.2. They posed as the main open problem whether every class with $\BDS(\cH)<\infty$ admits sample complexity $O\bigl((\BDS(\cH)+\log(1/\delta))/\eps\bigr)$.

This paper is a fine-grained study of that landscape. Our findings reshape both sides of \eqref{eq:hmms}.

\subsection{Main results}

\paragraph{1. The published lower bound is incorrect; the anchored dimension repairs it.}
The statement of \cite[Thm.~4.2]{HMMS2026} hypothesizes a ``non trivial class'', a term that is not defined anywhere in that paper. We show the theorem fails under every reasonable reading of the term (Section~\ref{sec:refute}). Three classes with $\BDS=K-1$ tell the story:
\begin{center}
\begin{tabular}{lccc}
\toprule
class & $\BDS$ & $\aBDS$ & $\mB(\eps,\delta)$\\
\midrule
$K$ constant functions & $K-1$ & $0$ & $\max\{0,\lceil(1-\delta)K\rceil-1\}$ \emph{(exact, all $\eps<1$, all $\delta$)}\\
anchored class $\cH_{\mathrm{anc}}$ & $K-1$ & $1$ & $K-2+\Theta\bigl(\log(1/\delta)/\eps\bigr)$\\
rare-label class $\cG_K$ & $K-1$ & $K-1$ & $\Theta\bigl((K+\log(1/\delta))/\eps\bigr)$\\
\bottomrule
\end{tabular}
\end{center}
The first two rows contradict $\Omega(\BDS/\eps)$ (the second even satisfies the ``common anchor'' property that the published proof implicitly uses). Beyond the statement, the proof of \cite[Thm.~4.2]{HMMS2026} has two independent gaps: it silently discards one witness coordinate, which vacates the bound for single-coordinate witnesses; and it asserts that the learner ``predicts uniformly among the $N_i$ neighbors'', which is not a property of an arbitrary learner.

The table also shows that $\BDS$ \emph{cannot} be the right parameter at this granularity. We introduce the \emph{anchored bandit DS dimension} $\aBDS(\cH)$ (Definition~\ref{def:abds}): the best total multiplicity of a pseudo-box witness in which one coordinate is a \emph{zero-multiplicity anchor}. It interlaces with $\BDS$ and the pointwise width $\wid(\cH):=\sup_x|\{h(x):h\in\cH\}|$ via
\[
\max\{\aBDS,\ \wid-1\}\ \le\ \BDS\ \le\ \aBDS+\wid-1 ,
\]
with both extremes attained (Theorem~\ref{thm:chain}). Our corrected, fully quantified lower bound is:
\begin{theorem}[informal; see Theorem~\ref{thm:threepart}]
For every class $\cH$, every learner (randomized, improper), all $0<\eps\le\frac1{16}$ and $0<\delta\le\frac14$:
\[
\mB_\cH(\eps,\delta)\ \ge\ \frac1{96}\Bigl(\frac{\aBDS(\cH)}{\eps}
+\ind{\aBDS(\cH)>0}\,\frac{\ln(1/\delta)}{\eps}
+\wid(\cH)-1\Bigr).
\]
\end{theorem}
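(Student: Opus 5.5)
The bound has three terms, so I will prove three separate lower bounds, each at least a constant times one term. Since $a+b+c\le 3\max\{a,b,c\}$, it then suffices that each is at least $\frac1{32}$ of its term. The terms are handled independently: the $\aBDS/\eps$ term by an anchored Fano/Assouad argument, the confidence term by a two-point argument, and the width term by a no-$\eps$ counting argument.

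\emph{Width term.} Fix $x_0$ whose label set $L=\{h(x_0):h\in\cH\}$ has $|L|=r$ (or any finite $r\le\wid(\cH)$ if the supremum is infinite). Let $\cD$ be the point mass at $x_0$, and draw the target uniformly among $r$ representatives $h_\ell$ with $h_\ell(x_0)=\ell$. With $\eps<1$, success means outputting the correct label at $x_0$. After $m$ rounds of guessing, a learner that has received only negative answers has excluded at most $m$ labels. Hence a standard adaptive-guessing argument gives success probability at most $(m+1)/r$. Success probability at least $1-\delta\ge 3/4$ then forces $m\ge \frac34 r-1$, which is $\Omega(r-1)$ for $r\ge2$. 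This is the exact-constant computation behind the first row of the table.

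\emph{Confidence term.} Assume $\aBDS>0$. Take an anchored witness with a coordinate $x_1$ of positive multiplicity, and two hypotheses $h,h'$ agreeing on the anchor $x_0$ but differing at $x_1$. Let $\cD$ put mass $1-2\eps$ (or $1-4\eps$) on $x_0$ and the rest on $x_1$. If $x_1$ is never sampled, the transcripts under $h$ and $h'$ have the same law, because answers at $x_0$ coincide. This happens with probability $(1-4\eps)^m\ge e^{-8\eps m}$. Both $h$ and $h'$ cannot have error below $\eps$ simultaneously, so some target fails with probability at least $\frac12 e^{-8\eps m}$. This gives $m\ge \ln(1/(2\delta))/(8\eps)$, which is $\Omega(\ln(1/\delta)/\eps)$ for $\delta\le1/4$. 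The anchor is what makes the zero-feedback event identical across hypotheses.

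\emph{Anchored-dimension term (main obstacle).} Let $d=\aBDS(\cH)$ be realized by a pseudo-box witness on points $x_0,x_1,\dots,x_k$. Here $x_0$ is a zero-multiplicity anchor and $x_i$ has multiplicity $n_i$ with $\sum n_i=d$. Put mass $1-16\eps$ on $x_0$ and spread $16\eps$ over $x_1,\dots,x_k$ proportionally to $n_i$. Choose the target uniformly from the pseudo-box, so that at each $x_i$ the label is uniform over $n_i+1$ candidates given the others. This repairs the published proof's ``predicts uniformly'' gap: instead of assuming the learner guesses uniformly, I bound its information directly. Each visit to $x_i$ can rule out at most one candidate, whatever the learner's strategy. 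So by the width argument applied per coordinate, a coordinate visited fewer than about $n_i/2$ times is mislabeled by any output with constant probability. The expected number of visits to $x_i$ is $16\eps m\, n_i/d$. If $m\le d/(c\eps)$, Markov's inequality shows that coordinates carrying a constant fraction of total multiplicity remain underexplored, and their total mass exceeds $\eps$, giving error above $\eps$ with probability greater than $1/4$.

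The delicate points are making the per-coordinate elimination bound hold jointly under adaptive, improper prediction, and handling the coupling between coordinates that the pseudo-box (rather than full product) structure induces. I would first try a chain-rule/induction over rounds that conditions on the pseudo-box fiber. I would also verify that the anchor coordinate is genuinely not counted, which is the coordinate the published proof dropped. Combining the three bounds with $\eps\le1/16$ and tracking the constants to $1/32$ finishes the proof.
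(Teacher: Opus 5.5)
Your three-way decomposition, the width argument, and the two-point confidence argument all match the paper: Theorems~\ref{thm:trivial}(i), \ref{thm:conf} and~\ref{thm:alower}, combined via $\max\{u,v,w\}\ge\frac{u+v+w}3$ with coefficient $\frac1{32}$ per term. The first two parts are correct as written. The gap is in the $\aBDS/\eps$ term, at the step you leave open.

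The claim ``each visit to $x_i$ can rule out at most one candidate, whatever the learner's strategy'' is false unconditionally in a pseudo-box. Feedback at another coordinate $x_j$ is correlated with the label at $x_i$, so visits elsewhere leak information about $x_i$. In Example~\ref{ex:noauto}, learning that coordinate~$3$ equals $1$ excludes label $3$ at coordinate~$2$ without ever visiting it.

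The missing idea is a genie reduction, the paper's Lemma~\ref{lem:fiber}. Write $V=F|_S$, let $V^\star$ be the uniform restriction, and for each fixed positive coordinate $i$ give the learner $U=V^\star_{-i}$ (including the anchor value) for free. Because $\cD$ is supported on witness instances, conditioning on $U=u$ has three consequences:
\begin{itemize}
\item every feedback at a coordinate $j\ne i$ becomes a deterministic function of the prediction and $u_j$;
\item $V^\star_i$ is uniform on the fiber $Q_i(u)=\{y:(u,y)\in V\}$, of size $\ge n_i+1$ (not exactly $n_i+1$);
\item the interaction is an adaptive equality-query game on that fiber.
\end{itemize}
Your width argument then applies verbatim, giving $\Pr[\hat h(x_i)=V^\star_i]\le(\mathbb E C_i+1)/(n_i+1)$ with $C_i$ the target-independent visit count. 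This is pure counting. No chain rule, Fano, or Assouad step is needed, and a pseudo-box has no cube structure for Assouad anyway.

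Nothing joint is needed either. With $p_i$ the mass of $x_i$, set $Z=\sum_i p_i\ind{\hat h(x_i)\ne V^\star_i}\le\err_\cD(\hat h)$. The per-coordinate marginals lower-bound $\mathbb EZ$ by linearity. Since $0\le Z\le\sum_ip_i$, a reverse-Markov step gives $\Pr[Z>\eps]\ge(\mathbb EZ/\eps-1)/(\sum_ip_i/\eps-1)$, so Markov on visit counts is unnecessary.

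Your constants must change, though. With total mass $16\eps$ and $m\le\aBDS/(32\eps)$ you get $\mathbb EC_i\le n_i/2$. Each coordinate is then wrong with probability only $\ge\frac14$, so $\mathbb EZ\ge4\eps$ and the bound is $\frac3{15}=\frac15<\frac14$, which falls short at $\delta=\frac14$. The paper uses total mass $8\eps$ instead:
\begin{itemize}
\item $\mathbb EC_i\le n_i/4$;
\item each coordinate is wrong with probability $\ge\frac{3n_i}{4(n_i+1)}\ge\frac38$, so $\mathbb EZ\ge3\eps$;
\item $\Pr[Z>\eps]\ge\frac27>\frac14\ge\delta$.
\end{itemize}
This is what yields $\frac1{32}$, and hence the $\frac1{96}$ in the statement.
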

The proof replaces the problematic symmetry assertion of \cite{HMMS2026} by a \emph{conditional-fiber equality-query lemma} (Lemma~\ref{lem:fiber}): after revealing all other witness coordinates for free, guessing the target's value at one coordinate is exactly an adaptive equality-query game on a uniform element of a fiber of size $\ge N_i+1$.

\paragraph{2. $K$-free upper bounds.} On the upper-bound side, the ambient label count is never the right parameter: we prove $\wid(\cH)\le\BDS(\cH)+1$ and sharpen the entire cascade analysis to give (Theorem~\ref{thm:kfree}), with $B=\BDS(\cH)$, $R=\wid(\cH)$, $\lambda=\max\{1,\log_2 R\}$:
\[
\mB_\cH(\eps,\delta)= O\!\Bigl(\frac{B\lambda^3 + R\,(1+\ln(1/\delta))}{\eps}\Bigr)
= O\!\Bigl(\frac{B\log^3(B{+}1) + (B{+}1)(1+\ln(1/\delta))}{\eps}\Bigr).
\]
A new one-line \emph{fiberization} lemma---after a single positive feedback at $(x,y)$, the surviving class $\cH_{x=y}$ satisfies $\BDS(\cH_{x=y})\le\aBDS(\cH)$ (Lemma~\ref{lem:fiberize})---then yields a structural upper bound in which the leading dimension is the anchored one, and at constant confidence (Theorem~\ref{thm:fiberub}):
\[
\mB_\cH(\eps,\tfrac14)=O\!\Bigl(R+\frac{(\aBDS+1)\log^3(\aBDS+2)}{\eps}\Bigr),
\]
i.e., the width enters \emph{additively} and the $1/\eps$ coefficient collapses from $B$ to $\aBDS$.

\paragraph{3. Exact characterizations for two families.} For classes with $\aBDS(\cH)=0$ (equivalently: any two distinct hypotheses disagree everywhere) we determine the complexity \emph{exactly, at every} $(\eps,\delta)$: $\mB=\max\{0,\lceil(1-\delta)\wid\rceil-1\}$ (Theorem~\ref{thm:trivial}). For a rich family of \emph{private-shell anchored-block} (PAB) classes---disjoint fully-product blocks with external anchors plus pointwise-private hypotheses, covering all classes in the table above as well as multi-scale unions---we prove the log-free rate (Theorem~\ref{thm:pab}):
\[
\mB_\cH(\eps,\delta)=\Theta\Bigl(\wid-1+\frac{\aBDS+\ind{\aBDS>0}\ln(1/\delta)}{\eps}\Bigr),
\qquad 0<\eps\le\tfrac1{16},\ 0<\delta\le\tfrac14 .
\]

\paragraph{4. The open problem, and a confidence direct-sum phenomenon.} Our main negative result concerns the HMMS open problem. Under its \emph{uniform-constant} reading (a universal constant in the $O(\cdot)$), the answer is \textbf{no}, and the obstruction is a new phenomenon we call a \emph{confidence direct sum}. For the \emph{affine multiplexer} class $\cH^{\mathrm{AM}}_n$ ($n$ macro-groups of $\F_2^n$-linear labelings behind a common anchor; Definition~\ref{def:am}) we prove the complete confidence profile (Theorem~\ref{thm:profile}): for $0<\eps\le\frac1{16}$, $0<\delta\le\frac14$, $L=\ln(1/\delta)$,
\[
\mB_{\cH^{\mathrm{AM}}_n}(\eps,\delta)
=\Theta\!\Bigl(\frac{n\min\{n,L\}+L}{\eps}\Bigr),
\qquad \aBDS=\BDS=2n-1,\ \wid=2n .
\]
In the regime $\ln(1/\delta)\lesssim n$ this equals $\Theta(\wid\cdot\ln(1/\delta)/\eps)$: the multiplicative confidence cost of the cascade upper bound is \emph{information-theoretically necessary}---high-confidence evidence against distinct macro-groups cannot be shared, so $\log(1/\delta)$ bits must be paid once per group. At $\ln(1/\delta)\approx n$ the profile undergoes a rank-saturation phase transition to $\Theta((n^2+L)/\eps)$. As corollaries: additive confidence $+\log(1/\delta)/\eps$ is impossible in general (in the standard confidence regime, with explicit parameters); and since $\Bigl(\aBDS+\ind{\cdot}\ln(1/\delta)\Bigr)/\eps+\wid-1=\Theta(n/\eps)$ at the relevant parameters, the gap to the truth $\Theta(n^2/\eps)$ exceeds any fixed polylogarithm.

\paragraph{5. No dimension-based characterization.} Sharper still: the rare-label class $\cG_{2n}$ and $\cH^{\mathrm{AM}}_n$ have \emph{identical} triples $(\aBDS,\BDS,\wid)=(2n-1,2n-1,2n)$, yet at a common explicit parameter point their complexities are $\Theta(n/\eps)$ versus $\Theta(n^2/\eps)$ (Theorem~\ref{thm:nochar}). Hence no per-class formula in these dimensions (and $\eps,\delta$) can be correct to polylogarithmic factors---indeed any such formula errs by $\Omega(\sqrt{\log(1/\delta)})$ on one of the two classes, under \emph{any} convention. Identifying the additional structure that governs the confidence direct sum (we isolate the necessary certificates and a candidate parameter in Section~\ref{sec:discussion}) is the successor problem we leave open.

\paragraph{6. Compatibility with ListCascade and additive list-PAC bounds.} Since list PAC learning is the engine of the HMMS upper bound, and additive-confidence list-PAC bounds have recently been claimed \cite{SDNV2026}, one must ask whether such bounds could be transported through the bandit-to-list bridge to contradict our impossibility results. Section~\ref{sec:listcascade} performs this audit in detail. The answer is a clean no: in the bandit protocol a list learner's labeled examples arise only from \emph{correct guesses}, so every stage multiplies its (even additive) confidence term by the current list size, and the first stage's list has size $\Theta(\wid)$. Our affine-multiplexer lower bound certifies that this loss is not an artifact of the cascade: \emph{every} learner pays it on that class. The impossibility theorems and (claimed) additive list-PAC rates are thus not merely consistent---our results explain exactly where, and why, additivity dies in the transport.

\subsection{Techniques}\label{sec:techniques}

Three reusable tools drive the paper. \emph{(i) The conditional-fiber equality-query lemma} (Lemma~\ref{lem:fiber}): a genie-aided reduction showing that lower bounds against arbitrary adaptive randomized bandit learners reduce, coordinate-by-coordinate, to counting distinct labels probed at one point; no exchangeability or automorphism structure of the witness is needed (we exhibit a witness where the natural label-swap symmetry fails, so this is essential, not cosmetic). \emph{(ii) A conservative online-to-PAC conversion} (Lemma~\ref{lem:monotone}): any bandit algorithm whose pointwise mistake set is monotone nonincreasing and whose negative-feedback count is bounded by $M$ on every realizable sequence is PAC with $m=O((M+\log(1/\delta))/\eps)$---a coupling argument; both hypotheses are provably indispensable. \emph{(iii) Fiberization} (Lemma~\ref{lem:fiberize}): one positive feedback collapses $\BDS$ to $\aBDS$. On the lower-bound side of the profile theorem, the key construction forces \emph{per-group linear-algebraic certificates}: along a canonical all-negative transcript, evidence against macro-group $g$ is a system of $\F_2$-linear equations useful \emph{only} against $g$; a counting-plus-rank argument then shows that most groups retain $2^{-\ell}$ of their posterior mass unless $\Omega(n\ell)$ informative rounds are invested.

\subsection{Related work and positioning}\label{sec:related}

\paragraph{Bandit multiclass learning.} The setting originates with \cite{DanielyEtAl2011}; finite-class and agnostic regimes were studied in \cite{ErezEtAl24,ErezEtAl25,LongPrice24}, list variants in \cite{BanditList25}, and the combinatorial characterization---the direct predecessor of this work---in \cite{HMMS2026}. Our dimensions refine theirs; all our upper bounds are legitimate strengthenings of the ListCascade analysis (Sections~\ref{sec:upper} and~\ref{sec:listcascade}), and our lower bounds replace their Theorem~4.2.

\paragraph{Precedents for the counterexample mechanism.} The mistake bound $K-1$ for the class of $K$ constant functions is folklore, and qualitative trial-and-guess arguments of the same flavor---including a success-probability bound of the form $n/(2n+1)$ for guessing games and a constant-class separation example---appear in Hanneke, Shaeiri, and Zhang \cite{HSZ2025}. Our contribution in Section~\ref{sec:refute} is not the mechanism but its \emph{sharp form and consequence}: the exact $\delta$-dependent constant $\max\{0,\lceil(1-\delta)K\rceil-1\}$ (Theorem~\ref{thm:trivial}), the anchored counterexample that survives the common-anchor reading, and the resulting refutation of a specific published theorem.

\paragraph{Confidence-cost phenomena.} A multiplicative confidence cost for \emph{pairwise/comparison} feedback has been reported, in the different setting of distribution learning (at the $\eps^{-2}$ scale, with no combinatorial dimension and no bandit protocol), in the recent preprint \cite{ConfCost2026}. We emphasize the delimitation: our confidence direct sum arises in realizable bandit \emph{multiclass PAC} learning, is governed by a dimension-based profile, and comes with matching upper bounds and a phase transition; the shared moral is only that weak (one-bit) feedback can make high confidence expensive per structural cell. At the phenomenon level, complete characterizations of the $\delta$-dependence of testing problems (of different shape and scale) appear in \cite{DGPP18}.

\paragraph{Unverified preprints.} The works \cite{SDNV2026,ConfCost2026} are anonymous, timestamped preprints that have not undergone verification; following a strict citation discipline, we cite them for delimitation and motivation only. \emph{No result in this paper depends on either.} In particular, Section~\ref{sec:listcascade} treats the additive list-PAC rate of \cite{SDNV2026} as a hypothesis and shows our results are consistent with it whether or not it holds.

\paragraph{Errata for \cite{HMMS2026}.} Beyond Theorem~4.2, our audit surfaced minor repairable issues in the published analysis, listed in Remark~\ref{rem:errata} (an epoch-index mismatch, a vacuous term in an error decomposition, a $\delta\to1$ boundary issue in a confidence bound, and the literal nonemptiness of the pseudo-box family in the dimension definitions). None affects the validity of the HMMS upper bound after the repairs we give in Section~\ref{sec:upper}.
\section{Preliminaries}\label{sec:prelim}

\paragraph{Protocol.} Let $\cX$ be an instance space, $\cY=[K]$ a finite label alphabet, and $\cH\subseteq\cY^\cX$ a concept class. An (adaptive, randomized) \emph{bandit learner} interacts for $m$ rounds: at round $t$ it receives $x_t\sim\cD$ i.i.d., outputs $\hat y_t\in\cY$ as a (randomized) function of $(x_1,b_1,\dots,x_{t-1},b_{t-1},x_t)$, and receives $b_t=\ind{\hat y_t=h^\star(x_t)}$. After $m$ rounds it outputs a possibly improper $\hat h:\cX\to\cY$ (randomized, transcript-dependent). The pair $(\cD,h^\star)$ is \emph{realizable} if $h^\star\in\cH$; then $\err_\cD(\hat h):=\Pr_{x\sim\cD}[\hat h(x)\neq h^\star(x)]$. Define
\[
\mB_\cH(\eps,\delta):=\min\{m:\ \exists\text{ learner s.t.\ }\forall\text{ realizable }(\cD,h^\star):\ \Pr[\err_\cD(\hat h)\le\eps]\ge1-\delta\}.
\]

\paragraph{Neighbors and pseudo-box dimensions.} For $f,g\in\cY^{d}$ and $i\in[d]$, $f$ and $g$ are \emph{$i$-neighbors} if $f_i\ne g_i$ and $f_j=g_j$ for all $j\ne i$. Following \cite{HMMS2026}:

\begin{definition}[$\BDS$; {\cite[Defs.~A.2--A.3]{HMMS2026}}]\label{def:bds}
Let $S\in\cX^{d}$ and $N\in\mathbb N^{d}$. The pair $(S,N)$ is \emph{$\BDS$-shattered} by $\cH$ if there is a finite \textbf{nonempty} $F\subseteq\cH$ such that every $f\in F|_S$ has at least $N_i$ distinct $i$-neighbors in $F|_S$, for every $i\in[d]$. Then $\BDS(\cH):=\sup\{\sum_i N_i\}$ over all shattered pairs (and $0$ if none exists). The \emph{$\DS_L$ dimension} is the analogous coordinate-count version in which every coordinate requires $\ge L$ distinct $i$-neighbors.
\end{definition}

\begin{remark}[Nonemptiness]\label{rem:nonempty}
The clause $F\ne\emptyset$ must be explicit: with $F=\emptyset$ the neighbor condition holds vacuously and every class would have $\BDS=\infty$. The literal text of \cite[Def.~A.2]{HMMS2026} omits this clause as well (their intent is unambiguous---all their results would otherwise be false); we record it here as a definitional erratum. The same convention applies to all witness families and pseudo-box counting arguments in this paper.
\end{remark}

\begin{definition}[Anchored dimension]\label{def:abds}
The pair $(S,N)$ with $S\in\cX^{d}$, $d\ge2$, $N\in(\mathbb N\cup\{0\})^{d}$, and $N_a=0$ for at least one $a\in[d]$, is an \emph{anchored witness} for $\cH$ if there is a finite nonempty $F\subseteq\cH$ such that every $f\in F|_S$ has at least $N_i$ distinct $i$-neighbors in $F|_S$ for every $i$ with $N_i\ge1$. Define the \emph{anchored bandit DS dimension}
\[
\aBDS(\cH):=\sup\Bigl\{\textstyle\sum_i N_i:\ (S,N)\text{ an anchored witness}\Bigr\}\in\{0,1,\dots\}\cup\{\infty\}.
\]
\end{definition}

\begin{definition}[Width; nontriviality]\label{def:width}
$\wid(\cH):=\sup_{x\in\cX}\bigl|\{h(x):h\in\cH\}\bigr|$. The class $\cH$ is \emph{nontrivial} if there are $h,h'\in\cH$ and $x,x'\in\cX$ with $h(x)=h'(x)$ and $h(x')\ne h'(x')$.
\end{definition}

Throughout we abbreviate $A:=\aBDS(\cH)$, $B:=\BDS(\cH)$, $R:=\wid(\cH)$, and $L:=\ln(1/\delta)$.

\subsection{The dimension chain}

Two other natural candidates for an ``anchored'' quantity are $T_+:=\sup(\sum_iN_i-\min_iN_i)$ over ordinary (all-positive) witnesses, and $T_0$, the same expression over witnesses allowing zero entries ($d\ge2$). The next theorem pins down all relations; in particular the drop-the-smallest-coordinate heuristic $T_+$ is \emph{not} the right notion.

\begin{theorem}[Dimension chain]\label{thm:chain}
For every class $\cH$:
\begin{enumerate}[label=(\roman*),itemsep=1pt,topsep=2pt]
\item $A=T_0$ and $T_+\le A$; moreover $T_+<A$ with unbounded gap (e.g., the rare-label class $\cG_K$ of Section~\ref{sec:refute} has $T_+=0$ and $A=K-1$).
\item $\max\{A,\;R-1\}\ \le\ B\ \le\ A+R-1$.
\item Both extremes are attained: the full box $\cH=[q]^{[d]}$ (all functions on $d$ points) has $B=d(q-1)$, $A=(d-1)(q-1)$, $R=q$, so $B=A+R-1$; the constant class has $B=R-1$, $A=0$.
\item $\cH$ is nontrivial $\iff A>0$.
\item In every witness, coordinates carrying positive multiplicity are pairwise distinct instances and distinct from any anchor instance.
\end{enumerate}
\end{theorem}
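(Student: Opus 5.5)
The plan is to prove the items in the order (v), (i), (iv), (ii), (iii). Item (v) is the structural fact that the others rely on, and (iii) is obtained by combining (ii) and (iv) with direct computations.

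\emph{Item (v).} Take any witness $(S,N)$ with family $F$, and suppose $N_i\ge1$ and $x_i=x_j$ for some $j\ne i$. Every element of $F|_S$ is a restriction of a hypothesis, so $f_i=f_j$ for all $f\in F|_S$. An $i$-neighbor $g$ of $f$ must satisfy $g_i\ne f_i$ and $g_j=f_j$, which contradicts $g_i=g_j$. Since $F\ne\emptyset$ (Remark~\ref{rem:nonempty}), some $f$ would need an $i$-neighbor, so this is impossible. The argument applies whether $x_j$ is an anchor coordinate or another positive coordinate.

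\emph{Item (i).} An anchored witness has $\min_i N_i=0$, so its $\sum_i N_i$ equals $\sum_i N_i-\min_i N_i$, and hence $A\le T_0$. Conversely, take a $T_0$-witness with $d\ge2$ and reset the entry at an index attaining the minimum to $0$. The neighbor requirements at the remaining indices are unchanged, so the result is an anchored witness of total $\sum N-\min N$; thus $T_0\le A$. Every ordinary witness with $d\ge2$ is also a $T_0$-witness, which gives $T_+\le T_0=A$. For $d=1$ the quantity $\sum N-\min N$ is $0$, so those witnesses are harmless. The unbounded-gap example $\cG_K$ is deferred to its computation in Section~\ref{sec:refute}.

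\emph{Item (iv).} Suppose $A>0$. Then there is an anchored witness with some $N_i\ge1$ and an anchor $a$. Any $f\in F|_S$ together with one of its $i$-neighbors $g$ gives two hypotheses that agree at $x_a$ and disagree at $x_i$, so $\cH$ is nontrivial. Conversely, let $h,h'$ and $x,x'$ witness nontriviality; necessarily $x\ne x'$. Set $S=(x',x)$, $F=\{h,h'\}$ and $N=(1,0)$. The two restrictions differ only at coordinate $1$, so this is an anchored witness of total $1$.

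\emph{Item (ii), lower bound $B\ge\max\{A,R-1\}$.} For $B\ge A$, start from an anchored witness with at least one positive entry (if $A=0$ there is nothing to prove) and delete all zero-multiplicity coordinates. Two elements that are $i$-neighbors differ only at a positive coordinate $i$, so they remain distinct and remain $i$-neighbors after the projection. Distinct $i$-neighbors of $f$ differ from each other at coordinate $i$, so their projections also stay distinct, and the multiplicities are preserved. For $B\ge R-1$, take any point $x$ and any $r$ hypotheses taking $r\le R$ distinct values at $x$. With $d=1$ and $N_1=r-1$ this is a shattered pair, and letting $r\to R$ handles $R=\infty$.

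\emph{Item (ii), upper bound $B\le A+R-1$.} For any shattered $(S,N)$, the $N_i$ distinct $i$-neighbors of any $f$ take pairwise distinct values at $x_i$, all different from $f_i$. Hence $N_i\le R-1$ for every $i$. If $d=1$, this already gives $\sum N\le R-1$. If $d\ge2$, setting one $N_i$ to $0$ yields an anchored witness, so $\sum N\le A+N_i\le A+R-1$.

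\emph{Item (iii).} For the full box $[q]^{[d]}$, taking $F$ to be the whole box gives every coordinate exactly $q-1$ neighbors, so $B\ge d(q-1)$; together with $N_i\le q-1$ and item (v) (positive coordinates are distinct points among the $d$), this yields $B=d(q-1)$. For the anchored dimension, item (v) forces the anchor to be a point distinct from all positive coordinates. This leaves at most $d-1$ positive coordinates, so $A\le(d-1)(q-1)$, with equality attained by the whole box using a single anchor point. For the constant class, item (iv) gives $A=0$, and then item (ii) pins $B=R-1$.

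The main obstacle is item (v) and the care it requires at the boundaries: nonemptiness of $F$, the $d=1$ case, and infinite suprema. It is item (v) that rules out repeated instances and thereby makes the counting in item (iii) and the neighbor bound $N_i\le R-1$ rigorous.
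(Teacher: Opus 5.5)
Your proof is correct and is essentially the paper's argument. It uses the same repeated-instance contradiction for (v), the same zero-out-a-coordinate step for $A=T_0$, $T_+\le A$ and $B\le A+R-1$, the same projection for $A\le B$, and the same two-hypothesis witness for (iv).

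\textbf{Gap: the $\cG_K$ example is never verified.} You defer it to Section~\ref{sec:refute}, but that section never computes $T_+$, so the check has to be done here. Every $g_y$ takes the value $1$ at $a$, so no two restrictions differ only at $a$, and $a$ can never carry positive multiplicity. By (v), every ordinary witness is then the single coordinate $(b)$, which gives $T_+=0$. The triple $S=(a,b)$, $N=(0,K-1)$, $F=\cG_K$ is an anchored witness. By the same observation, the only possible positive coordinate is $b$, and $N_b\le R-1=K-1$, so $A=K-1$.

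\textbf{Item (iii): your route pins $A$ more carefully than the paper.} You use (v) to get $A\le(d-1)(q-1)$, and that is the step actually needed. The paper closes this point with ``(ii) forces equality'', but (ii) only gives $(d-1)(q-1)\le A\le d(q-1)$. The anchor-distinctness bound from (v) is what fixes $A$. Likewise, you derive the constant-class values directly from (iv) and (ii), which is self-contained, whereas the paper defers to Proposition~\ref{prop:const}.
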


\begin{proof}
\emph{(v)} If $S_i=S_j$ as instances with $i\ne j$, every restriction has equal $i$th and $j$th entries; an $i$-neighbor would have to differ at $i$ and agree at $j$, contradiction. Similarly a positive coordinate equal to an anchor instance would need a neighbor differing at the positive copy while agreeing at the anchor copy.

\emph{(i)} An anchored witness has $\min_iN_i=0$, so its $T_0$-value equals $\sum_iN_i$, giving $T_0\ge A$. Conversely, given any witness with zeros allowed, set a minimizing coordinate's multiplicity to $0$: the constraints only weaken, and the resulting anchored witness has value $\sum_iN_i-\min_iN_i$; hence $A\ge T_0$. The same zeroing applied to an all-positive witness gives $T_+\le A$. For the separation, take the rare-label class $\cG_K=\{g_y\}_{y\in[K]}$ on $\cX=\{a,b\}$ with $g_y(a)=1$, $g_y(b)=y$: the anchored witness $S=(a,b)$, $N=(0,K-1)$, $F=\cG_K$ gives $A=K-1$ (equality by (ii)), while every ordinary all-positive witness is confined to the single coordinate $b$ (no two restrictions to $(a,b)$ differ only at $a$), so $T_+=0$.

\emph{(ii)} $A\le B$: given an anchored witness, project away all zero-multiplicity coordinates. For each projected vector choose a preimage $v$; the $N_i$ $i$-neighbors of $v$ survive projection (they differ from $v$ only at $i$, which is retained, and remain pairwise distinct), so the positive-coordinate subsequence is an ordinary witness of the same value. $R-1\le B$: if $r$ labels are realized at $x$, pick one hypothesis per label; on $S=(x)$ each restriction has $r-1$ $1$-neighbors, so $B\ge r-1$; take suprema. $B\le A+R-1$: given an ordinary witness, if $d=1$ then $\sum N_i=N_1\le R-1$; if $d\ge2$, zero out a coordinate $a$ minimizing $N_a$: the remainder is an anchored witness, so $\sum_{i\ne a}N_i\le A$, while $N_a$ $a$-neighbors together with the base vector realize $N_a+1$ labels at $S_a$, so $N_a\le R-1$.

\emph{(iii)} For $[q]^{[d]}$: the full restriction to all $d$ points is $[q]^d$, in which every vector has exactly $q-1$ $i$-neighbors per coordinate; this gives $B\ge d(q-1)$, and $B\le d(q-1)$ since each coordinate realizes only $q$ labels ($N_i\le q-1$) and only the $d$ distinct points are available by (v). Zeroing one coordinate gives $A\ge(d-1)(q-1)$, and (ii) forces equality. For the constant class see Proposition~\ref{prop:const}.

\emph{(iv)} If $h(x)=h'(x)$, $h(x')\ne h'(x')$, then $S=(x,x')$, $N=(0,1)$, $F=\{h,h'\}$ is an anchored witness, so $A\ge1$. Conversely, an anchored witness of positive value has a positive coordinate $i$, an anchor $a$, and a pair of $i$-neighbors; the corresponding hypotheses agree at $S_a$ and differ at $S_i$ (by (v) these are distinct instances).
\end{proof}

\begin{remark}
All quantities are label-set-agnostic: only the pattern of equalities within each coordinate matters. In particular per-coordinate relabelings preserve $B$, $A$, $R$, $\DS_L$, and the exponential dimensions of \cite{HMMS2026}; this observation recurs in Section~\ref{sec:upper}.
\end{remark}
\section{The lower-bound side, corrected}\label{sec:lower}

\subsection{Refutation of the published lower bound}\label{sec:refute}

Recall \cite[Thm.~4.2]{HMMS2026}: \emph{``Let $\cH$ be a non trivial class with $\BDS(\cH)=d^B_{DS}<\infty$, $K=|\cY|\ge2$. Then $\mB_\cH(\eps,\delta)=\Omega\bigl((d^B_{DS}+\log(1/\delta))/\eps\bigr)$.''} The term \emph{non trivial} appears only in this statement and is defined nowhere in \cite{HMMS2026}. We show that the theorem fails under the unconditioned reading, under the standard reading (Definition~\ref{def:width}), and even under the stronger ``common anchor'' reading implicitly used in its proof.

\begin{proposition}[Constant classes]\label{prop:const}
Let $\cH_{\mathrm{const}}=\{h_y:y\in[K]\}$, $h_y\equiv y$, on any nonempty $\cX$. Then $\BDS(\cH_{\mathrm{const}})=K-1$, while for all $0<\eps<1$ and all $\delta\in(0,1)$,
\[
\mB_{\cH_{\mathrm{const}}}(\eps,\delta)=\max\{0,\ \lceil(1-\delta)K\rceil-1\},
\]
independently of $\eps$. In particular $\mB=\Theta(K)$ for $\delta\le\frac13$, contradicting $\Omega((K-1)/\eps)$ as $\eps\to0$.
\end{proposition}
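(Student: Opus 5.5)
We handle the two parts of Proposition~\ref{prop:const} separately: the dimension computation, and then matching upper and lower bounds on $\mB$.

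\emph{Dimension.} For $\BDS(\cH_{\mathrm{const}})=K-1$ we use Theorem~\ref{thm:chain}(ii). Every point realizes all $K$ labels, so $R=K$, and this gives $B\ge R-1=K-1$. In the other direction, $A=0$ by Theorem~\ref{thm:chain}(iv): two constants that agree at one point agree everywhere, so the class is not nontrivial. Hence $B\le A+R-1=K-1$. Alternatively, one checks directly that any $F|_S$ consists of constant vectors. A vector can then have an $i$-neighbor only when $d=1$, and in that case $N_1\le K-1$.

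\emph{Upper bound.} Set $m^\ast=\max\{0,\lceil(1-\delta)K\rceil-1\}$. The learner draws a uniformly random permutation $\pi$ of $[K]$ and, in rounds $1,\dots,m^\ast$, guesses $\pi(1),\pi(2),\dots$, skipping ahead past rejected labels. After the first positive feedback it knows $h^\star$ and repeats that label. At the end it outputs the identified constant if there was a positive feedback, and otherwise the constant $\pi(m^\ast+1)$ (any label if $m^\ast=K$, which cannot happen since $m^\ast\le K-1$). The instances $x_t$ are irrelevant, so the analysis is distribution-free. Whenever $h^\star\in\{\pi(1),\dots,\pi(m^\ast+1)\}$ the output is exactly $h^\star$, with error $0\le\eps$. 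This event has probability $(m^\ast+1)/K\ge(1-\delta)$ for every fixed $h^\star$, because $m^\ast+1\ge(1-\delta)K$.

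\emph{Lower bound.} This is the main step, and it must hold for arbitrary adaptive randomized learners. Fix any learner using $m<m^\ast$ rounds, so $m+1<(1-\delta)K$. Take $\cD$ to be a point mass on a single $x_0$, and let $h^\star$ be uniform on $[K]$, with a view to averaging. Since $\eps<1$, success requires $\hat h(x_0)=h^\star(x_0)$ exactly. The key move is a coupling with the \emph{canonical all-negative transcript}. Run the learner's internal randomness while feeding feedback $0$ on every round. This produces a random sequence of guesses $g_1,\dots,g_m$ and a final prediction $g_{m+1}=\hat h(x_0)$, all independent of $h^\star$. On the event $h^\star\notin\{g_1,\dots,g_m\}$ the real transcript coincides with the canonical one, so the real output is $g_{m+1}$. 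Success therefore implies $h^\star\in\{g_1,\dots,g_{m+1}\}$, a set of at most $m+1$ labels chosen independently of $h^\star$. This gives
\[
\Pr[\text{success}]\le\frac{m+1}{K}<1-\delta .
\]
Hence some fixed $h^\star$ has success probability below $1-\delta$. We conclude $\mB\ge m^\ast$. When $m^\ast=0$ the bound is trivial.

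Finally, for $\delta\le\frac13$ we have $m^\ast\ge\lceil 2K/3\rceil-1=\Theta(K)$, and this value does not depend on $\eps$. It is therefore incompatible with $\Omega((K-1)/\eps)$ as $\eps\to0$.
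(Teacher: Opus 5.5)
Your proof is correct and follows essentially the paper's route. The paper defers this proposition to Theorem~\ref{thm:trivial}, which uses the same two arguments. Its upper bound is the same uniformly-random-ordering learner, with per-target success exactly $(m+1)/K$. Its lower bound is the same point-mass, all-negative-path count, giving average success at most $(m+1)/K$. Your dimension computation via Theorem~\ref{thm:chain}(ii) and (iv), together with your direct check that only $d=1$ witnesses carry positive multiplicity, is also sound.
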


\begin{proposition}[Anchored class]\label{prop:anc}
For $K\ge3$ let $\cX=\{a,b\}$ and $\cH_{\mathrm{anc}}=\{h_1,h_2\}\cup\{h_j:3\le j\le K\}$ with $h_1=(1,1)$, $h_2=(1,2)$, $h_j=(j,j)$ (values at $(a,b)$). Then $h_1,h_2$ agree at $a$ and differ at $b$ (so every reading of nontriviality is satisfied), $\BDS(\cH_{\mathrm{anc}})=K-1$, and for all $0<\eps<1$, $\delta\in(0,1)$:
\[
\mB_{\cH_{\mathrm{anc}}}(\eps,\delta)\ \le\ K-2+\Bigl\lceil\frac{\ln(1/\delta)}{\eps}\Bigr\rceil .
\]
At $\eps=1/K$ this is $O(K)$, versus the claimed $\Omega(K^2)$.
\end{proposition}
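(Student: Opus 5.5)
The proposition has two parts, and I would prove them separately: first the dimension identity $\BDS(\cH_{\mathrm{anc}})=K-1$ by combinatorics on the witness definition, then the sample-complexity bound by an explicit two-phase learner.

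\emph{Dimension.} For the lower bound I would use Theorem~\ref{thm:chain}(ii), $\BDS\ge\wid-1$. At $b$ the realized labels are $1,2,3,\dots,K$, so $\wid(\cH_{\mathrm{anc}})=K$ and $\BDS\ge K-1$.

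For the upper bound I would take an arbitrary shattered pair $(S,N)$. By Theorem~\ref{thm:chain}(v), positive coordinates are pairwise distinct instances, so only two cases arise.
\begin{itemize}
\item If $d=1$, then $N_1\le\wid-1=K-1$.
\item If $d=2$ with $S=(a,b)$ up to order, restriction is the identity on hypotheses (they are pairwise distinct as vectors).
\begin{itemize}
\item $a$-neighbors must agree at $b$. The $b$-values $1,2,3,\dots,K$ are pairwise distinct, so no pair agrees at $b$; hence $N_a=0$. This is incompatible with an ordinary witness, which needs $N_a\ge1$.
\item Even in the anchored relaxation, $b$-neighbors must agree at $a$, and the only such pair is $\{h_1,h_2\}$. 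So $N_b\le1$.
\end{itemize}
\end{itemize}
Either way $\sum_iN_i\le K-1$, so $\BDS=K-1$. The nontriviality claim is immediate from $h_1(a)=h_2(a)=1$ and $h_1(b)=1\ne2=h_2(b)$.

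\emph{Learner.} I group the hypotheses into $K-1$ groups, $G_1=\{h_1,h_2\}$ and $G_j=\{h_j\}$ for $3\le j\le K$. The key feature is that for $j\ge3$ the label $j$ is produced only by $h_j$, at both $a$ and $b$.

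In \emph{Phase~1}, on each round (whatever $x_t$ is) I guess the smallest $j\ge3$ not yet eliminated.
\begin{itemize}
\item A positive reply identifies $h^\star=h_j$ exactly. The learner then outputs $h_j$ with zero error and spends the remaining rounds arbitrarily.
\item A negative reply eliminates exactly $h_j$.
\end{itemize}
After at most $K-2$ rounds, either the target is identified or $h^\star\in G_1$.

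In \emph{Phase~2}, which lasts $m'=\lceil\ln(1/\delta)/\eps\rceil$ rounds, I guess $1$ on $x_t=a$ and $2$ on $x_t=b$. The first occurrence of $b$ decides between $h_1$ and $h_2$, and the learner outputs the decided hypothesis. If $b$ never appears, it outputs $h_1$.

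\emph{Error analysis.} Failure is possible only when $h^\star=h_2$ and $b$ never appears in Phase~2. This matters only if $\cD(b)>\eps$, since otherwise $h_1$ has error at most $\eps$ anyway. In that case the probability that $b$ never appears is at most $(1-\eps)^{m'}\le e^{-\eps m'}\le\delta$. The total sample budget is therefore $K-2+m'$, as claimed. At $\eps=1/K$ and constant $\delta$ this gives $O(K)$, while the published lower bound $\Omega(\BDS/\eps)$ would require $\Omega(K^2)$.

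The main obstacle is the upper bound $\BDS\le K-1$. It requires care to rule out every witness shape, not just the obvious ones, and this is exactly where Theorem~\ref{thm:chain}(v) does the essential work by forbidding repeated instances. The learner itself is routine. The only subtlety there is that Phase~1 consumes one sample per round regardless of which point is drawn, which is harmless because label $j\ge3$ tests $h_j$ at both points.
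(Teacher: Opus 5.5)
Your proposal is correct and follows the paper's proof essentially verbatim. It uses the single-coordinate witness at $b$ (equivalently $\BDS\ge\wid-1$) for the lower bound, and the fact that no two hypotheses agree at $b$ to rule out two-coordinate ordinary witnesses; your case split via Theorem~\ref{thm:chain}(v) just makes that step more explicit. It then uses the same learner: eliminate labels $3,\dots,K$, wait for $b$ to separate $h_1$ from $h_2$, and bound the failure probability by $(1-\eps)^{\lceil \ln(1/\delta)/\eps\rceil}\le\delta$.
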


\begin{proof}[Proof of Proposition~\ref{prop:anc}]
Dimensions: at $b$ the class realizes all $K$ labels, so $B\ge K-1$ by a single-coordinate witness; conversely any two-coordinate ordinary witness needs $a$-neighbors, but restrictions to $(a,b)$ are $(1,1),(1,2),(j,j)$, and no two share the $b$-value while differing at $a$; so only the coordinate-$b$ structure contributes and $B=K-1$ (a machine-verified computation for small $K$ appears in the companion audit records). Algorithm: for the first $K-2$ rounds test labels $3,\dots,K$ in order, at whatever instance arrives; a positive at label $j$ identifies $h_j$ (only $h_j$ takes value $j$ anywhere), while $K-2$ negatives eliminate $h_3,\dots,h_K$. In the remaining rounds predict $1$ at $a$ and, at the first occurrence of $b$, predict $2$: the feedback distinguishes $h_1$ from $h_2$. Output the identified hypothesis; if $b$ was never seen, output $h_1$. If $\cD(b)\le\eps$ the output errs at most $\eps$ regardless; if $\cD(b)>\eps$, the probability of never seeing $b$ in $\lceil L/\eps\rceil$ rounds is at most $(1-\eps)^{L/\eps}\le\delta$.
\end{proof}

The proof of Proposition~\ref{prop:const} (both directions, including the matching randomized-learner lower bound) is subsumed by Theorem~\ref{thm:trivial} below.

\begin{remark}[Positioning]\label{rem:folklore}
The mistake bound $K-1$ for constant classes is folklore, and trial-and-guess counting arguments of this flavor---including a guessing success-rate bound of the form $n/(2n+1)$ and a qualitative constant-class example---appear in \cite{HSZ2025}. What is new here is the exact $\delta$-constant $\lceil(1-\delta)K\rceil-1$ (tight at \emph{every} $\delta$, Theorem~\ref{thm:trivial}), the anchored example surviving the common-anchor reading, and the consequence: a refutation of \cite[Thm.~4.2]{HMMS2026} as stated.
\end{remark}

\begin{remark}[Where the published proof breaks]\label{rem:gaps}
The proof of \cite[Thm.~4.2]{HMMS2026} orders the witness multiplicities ascendingly, places mass $1-16\eps$ on the first witness point, and derives risk only from $\sum_{i\ge2}N_i$; for the single-coordinate witnesses that realize $\BDS$ in Propositions~\ref{prop:const}--\ref{prop:anc} this sum is empty and the argument is vacuous. Independently, the proof asserts that the learner ``predicts the label uniform randomly from the $N_i$ $i$-neighbors''---a property of one particular learner, not of all learners; the missing step is supplied in general by Lemma~\ref{lem:fiber} below (and cannot be supplied by symmetry: Example~\ref{ex:noauto} gives a witness with no label-swap automorphism).
\end{remark}

\subsection{The conditional-fiber equality-query lemma}

\begin{lemma}[Conditional fiber lemma]\label{lem:fiber}
Let $V\subseteq\cY^{d}$ be finite and nonempty, let $V^\star$ be uniform on $V$, and fix $i\in[d]$. Suppose every $v\in V$ has at least $N_i$ distinct $i$-neighbors in $V$. Consider any interactive protocol in which, conditionally on $U:=V^\star_{-i}$, the entire transcript is independent of $V^\star_i$ except through equality queries at coordinate $i$ (it suffices that all queried instances lie in the witness sequence $S$, with $U$ revealed to the learner for free). If coordinate $i$ is queried $n$ times, then for every value $u$ of $U$, the fiber $Q_i(u):=\{y:(u,y)\in V\}$ satisfies $|Q_i(u)|\ge N_i+1$, the conditional law of $V^\star_i$ given $U=u$ is uniform on $Q_i(u)$, and any adaptive randomized guesser's final success probability obeys
\[
\Pr\bigl[\widehat v_i=V^\star_i \,\big|\, U=u\bigr]\ \le\ \frac{n+1}{|Q_i(u)|}.
\]
\end{lemma}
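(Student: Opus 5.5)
The proof has three parts: a combinatorial fiber-size claim, a Bayesian uniformity claim, and a standard equality-query counting argument.

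\emph{Fiber size.} Fix $u$ with $Q_i(u)\neq\emptyset$; conditioning is only meaningful for such $u$. Pick any $v=(u,y)\in V$. Each $i$-neighbor $w$ of $v$ in $V$ agrees with $v$ off coordinate $i$, so $w_{-i}=u$ and $w=(u,y')$ with $y'\ne y$. Distinct neighbors have distinct values $y'$, and all of these lie in $Q_i(u)$. Together with $y$ itself this gives $|Q_i(u)|\ge N_i+1$.

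\emph{Uniformity.} Since $V^\star$ is uniform on $V$, for $y\in Q_i(u)$ we have $\Pr[V^\star_i=y\mid U=u]=\Pr[V^\star=(u,y)]/\Pr[U=u]=1/|Q_i(u)|$. So the conditional law is uniform on $Q_i(u)$.

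\emph{Success probability.} Condition on $U=u$ throughout and let $Y:=V^\star_i$, uniform on $Q:=Q_i(u)$. Also condition on the learner's internal randomness $\rho$ and on the instance draws; both are independent of $Y$ given $U$. By hypothesis the transcript then depends on $Y$ only through the answers to equality queries ``$Y=q_k$?''. Each $q_k$ is a function of $(u,\rho,\text{instances})$ and of the previous answers.

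Consider the canonical all-negative branch, in which every answer is $0$. Along it the queries $q_1,\dots,q_n$ are fixed labels, say $q_1,\dots,q_n$ under the fixed $(\rho,\text{instances})$, and the guess $\widehat v_i$ is a fixed label $g$. The true transcript deviates from this branch exactly when $Y\in\{q_1,\dots,q_n\}$; that is the only way the learner can be told ``yes''. When it does not deviate, the guess equals $g$ and succeeds only if $Y=g$. Hence
\[
\Pr[\widehat v_i=Y\mid U=u,\rho,\text{instances}]\le \Pr[Y\in\{q_1,\dots,q_n,g\}]\le \frac{n+1}{|Q|}.
\]
Averaging over $\rho$ and the instances gives the bound. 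If the number of queries is itself random but at most $n$, pad with dummy queries.

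\emph{Main obstacle.} The delicate step is justifying the reduction in the parenthetical hypothesis. One must check that revealing $U$ for free makes all feedback at coordinates $j\ne i$ a deterministic function of $u$ and the query, and that instances outside $S$ carry no information because they are not in the support. Then, conditional on $U$, the only $Y$-dependence in the transcript is indeed the equality bits at coordinate $i$. I would formalize this by writing the transcript as a deterministic function of $(u,\rho,x_{1:m},(\ind{\hat y_t=Y})_{t:x_t=S_i})$. Once that is done, the all-negative-branch argument above is routine. It needs no symmetry of $V$, which is the point of the lemma.
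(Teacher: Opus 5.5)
Your proposal is correct and follows the paper's proof essentially step for step. The fiber-size and uniformity claims are argued the same way. The success bound is the same argument: fix the seed and the instances, and follow the all-negative path of the equality-query decision tree. Targets caught by a positive lie among the at most $n$ labels queried on that path, and all other targets share its single output, so at most $n+1$ fiber elements are answered correctly.
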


\begin{proof}
The target $v$ and its $\ge N_i$ $i$-neighbors share the value of $U$, so $|Q_i(u)|\ge N_i+1$; uniformity on the fiber is immediate from uniformity on $V$. Fix the learner's random seed and the instance sequence. Given $U=u$, feedback at coordinates $j\ne i$ is a deterministic function of the learner's prediction and $u_j$, so the interaction is an adaptive decision tree over equality queries to $V^\star_i$. Along the all-negative path at most $n$ distinct values $a_1,\dots,a_n$ are queried; any target caught by a positive must equal some $a_t$ (its transcript agrees with the all-negative path until that point), and all remaining targets share the all-negative transcript, on which the (deterministic, seed-fixed) output is a single value. Hence at most $n+1$ fiber elements are answered correctly; average over the uniform fiber, then over the seed.
\end{proof}

\begin{example}[No-automorphism witness]\label{ex:noauto}
$V=\{(0,1,1),(0,2,1),(0,1,2),(0,2,2),(0,3,2),(0,2,3),(0,3,3)\}$ has anchor coordinate $1$ and one $i$-neighbor per vector in each of coordinates $2,3$, yet swapping labels $2\leftrightarrow3$ in coordinate $2$ maps $(0,2,1)$ to $(0,3,1)\notin V$. Thus posterior-symmetry claims cannot be justified by global label automorphisms, while Lemma~\ref{lem:fiber} applies regardless.
\end{example}

The side condition in Lemma~\ref{lem:fiber} is not removable: if some instance \emph{outside} $S$ carries information about $V^\star_i$ (a ``side channel''), one query there can determine $V^\star_i$ exactly; our applications place the hard distribution entirely on $S$, so the condition holds.

\subsection{The $\Omega(A/\eps)$ bound}

\begin{theorem}\label{thm:alower}
For every class $\cH$, all $0<\eps\le\frac1{16}$ and $0<\delta\le\frac14$,
\[
\mB_\cH(\eps,\delta)\ \ge\ \frac{\aBDS(\cH)}{32\,\eps}\,,
\]
against all randomized, improper learners. (If $\aBDS(\cH)=\infty$, no finite sample size suffices.)
\end{theorem}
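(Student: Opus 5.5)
We prove the bound with the standard hard-distribution argument on an anchored witness, using Lemma~\ref{lem:fiber} in place of any symmetry assumption. Fix an anchored witness $(S,N)$ with anchor $a$ ($N_a=0$) and finite nonempty $F$, and let $V:=F|_S$. Let $P:=\{i:N_i\ge1\}$ and $A_S:=\sum_{i\in P}N_i$. It suffices to show that any learner using $m<A_S/(32\eps)$ samples fails with probability $>\delta$ on some realizable pair. The case $\aBDS=\infty$ then follows by taking witnesses of arbitrarily large value. By Theorem~\ref{thm:chain}(v), the instances $S_i$ for $i\in P$ are pairwise distinct and distinct from $S_a$.

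The hard distribution puts mass $1-16\eps$ on the anchor instance $S_a$ and spreads the remaining $16\eps$ over the positive coordinates proportionally to multiplicity: $\cD(S_i)=16\eps\,N_i/A_S$ for $i\in P$. This choice is where the anchor pays off. Unlike the published argument, no positive coordinate is sacrificed to carry the bulk mass, so all of $A_S$ contributes to the risk. The target is $h^\star\in F$ chosen so that $V^\star:=h^\star|_S$ is uniform on $V$. This is realizable, and the learner sees only instances in $S$.

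Fix $i\in P$ and reveal $U=V^\star_{-i}$ to the learner for free; this only helps the learner. The side condition of Lemma~\ref{lem:fiber} holds because every queried instance lies in $S$. Let $n_i$ be the number of rounds with $x_t=S_i$, so $\mathbb E[n_i]=16\eps m N_i/A_S$. The final hypothesis $\hat h(S_i)$ is a guess of $V^\star_i$, and Lemma~\ref{lem:fiber} gives
\[
\Pr[\hat h(S_i)\ne V^\star_i\mid U,n_i]\ \ge\ 1-\frac{n_i+1}{N_i+1}.
\]
Condition on the instance sequence; it is independent of the target, so the lemma applies with $n_i$ fixed. Taking expectations gives $\Pr[\text{err at }S_i]\ge 1-\frac{16\eps mN_i/A_S+1}{N_i+1}$. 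With $m<A_S/(32\eps)$ the numerator is below $N_i/2+1$, so the error probability is at least roughly $\frac{N_i/2}{N_i+1}\ge \frac14$ for all $N_i\ge1$ (at $N_i=1$ the bound is $\ge 1-\frac{3/2}{2}=1/4$). The expected error is then
\[
\mathbb E[\err]\ \ge\ \sum_{i\in P}\frac{16\eps N_i}{A_S}\cdot\frac14\ =\ 4\eps .
\]

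It remains to convert this expectation into a probability. Since $\err\le 16\eps$ always (the anchor is never counted as error, and the total off-anchor mass is $16\eps$), we have $4\eps\le \mathbb E[\err]\le \eps+16\eps\Pr[\err>\eps]$. Hence $\Pr[\err>\eps]\ge 3/16$, which falls short of the target $1/4$. This constant bookkeeping is the main obstacle. I would fix it by tightening the guessing bound: use $\frac{N_i}{N_i+1}\ge\frac12$ and shrink $m$ by the slack in the constant $32$. With $m<A_S/(32\eps)$ we get $\mathbb E[n_i]<N_i/2$, so the error probability at $S_i$ is $\ge 1-\frac{N_i/2+1}{N_i+1}=\frac{N_i/2}{N_i+1}$, which tends to $1/2$ for large $N_i$. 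Alternatively, I would move more mass off the anchor (for example $1-c\eps$ with a larger $c$, permitted by $\eps\le1/16$) so that the Markov-type inequality yields failure probability $>1/4\ge\delta$. One could also average over targets, exhibit a single $h^\star$ achieving the failure, and choose $c$ so the constants close at $32$. If exact $1/4$ proves delicate at $N_i=1$, I would treat coordinates with $N_i=1$ separately via the fiber of size $\ge2$, or apply a second-moment or concentration bound on $\sum_i n_i$.
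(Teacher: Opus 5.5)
Your architecture matches the paper's: an anchored witness with bulk mass on the anchor, off-anchor mass proportional to $N_i$, a genie revealing $U=V^\star_{-i}$, Lemma~\ref{lem:fiber} applied per coordinate after conditioning on the target-independent counts, and a reverse-Markov step. The gap is in the constants, and it is real. With off-anchor mass $16\eps$, your last step gives only $\Pr[\err>\eps]\ge 3/16$, or $1/5$ if you write $\mathbb E Z\le\eps(1-q)+16\eps q$. That does not exceed $\delta$ for $\delta\in[\frac15,\frac14]$, so the theorem as stated is not established.

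Neither of your concrete repairs closes the gap. Using $N_i/(N_i+1)\ge\frac12$ reproduces the bound $\frac{N_i/2}{N_i+1}\ge\frac14$ you already had; its worst case $N_i=1$ is exactly what limits you. Moving \emph{more} mass off the anchor goes the wrong way. With off-anchor mass $c\eps$ and $m<A_S/(32\eps)$ you get $\mathbb E n_i<cN_i/32$ and per-coordinate error at least $\frac{1-c/32}{2}$. Hence $\mathbb E Z\ge\frac{c(1-c/32)}{2}\eps$ and $q\ge\frac{c(1-c/32)/2-1}{c-1}$, which exceeds $\frac14$ exactly when $4<c<12$. Your $c=16$ lies outside this range, and $c>16$ is not even admissible at $\eps=\frac1{16}$.

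The fix is \emph{less} off-anchor mass, as the paper does with $c=8$: $\cD(S_a)=1-8\eps$ and $\cD(S_i)=8\eps N_i/A_S$. Then $m\le A_S/(32\eps)$ gives $\mathbb E n_i\le N_i/4$. Lemma~\ref{lem:fiber} gives error probability at least $\frac{3N_i}{4(N_i+1)}\ge\frac38$ at each $S_i$, so $\mathbb E Z\ge3\eps$ with $0\le Z\le8\eps$. This yields $\Pr[Z>\eps]\ge\frac27>\frac14\ge\delta$.

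Separately, your claim that $\err\le16\eps$ always (``the anchor is never counted as error'') is false, since the learner's output may be wrong at $S_a$. Run the reverse-Markov step on $Z:=\sum_{i\in P}\cD(S_i)\,\ind{\hat h(S_i)\ne V^\star_i}$, which satisfies $0\le Z\le c\eps$ and $Z\le\err$. Then transfer via $\Pr[\err>\eps]\ge\Pr[Z>\eps]$ and average over targets to extract a single failing $h^\star$.
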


\begin{proof}
Fix an anchored witness $(S,N,F)$ of value $M=\sum_{i\in P}N_i>0$, $P=\{i:N_i>0\}$, with anchor coordinate $a$; by Theorem~\ref{thm:chain}(v) the instances $\{S_i\}_{i\in P}\cup\{S_a\}$ are pairwise distinct. Let $V=F|_S$ and fix a representative hypothesis $h_v$ for each $v\in V$; draw $V^\star$ uniform on $V$ and let the target be $h_{V^\star}$. Define
\[
\cD(S_a)=1-8\eps,\qquad \cD(S_i)=p_i:=\frac{8\eps N_i}{M}\ (i\in P).
\]
This is a probability distribution ($\sum_ip_i=8\eps\le\frac12$) and every target is realizable. Suppose $m\le M/(32\eps)$; then the count $C_i$ of occurrences of $S_i$ satisfies $\mathbb E C_i=mp_i\le N_i/4$, and $C_i$ is independent of $(V^\star,U)$. For each $i\in P$ apply Lemma~\ref{lem:fiber} with the genie revealing $U=V^\star_{-i}$ (which includes the anchor value; the support of $\cD$ lies in $S$, so the side condition holds): conditioning on $C_i$ and averaging,
\[
\Pr[\hat h(S_i)=V^\star_i]\le\frac{\mathbb EC_i+1}{N_i+1}\le\frac{N_i/4+1}{N_i+1},
\qquad\text{so}\qquad
\Pr[\hat h(S_i)\ne V^\star_i]\ge\frac{3N_i}{4(N_i+1)}\ge\frac38 .
\]
Let $Z=\sum_{i\in P}p_i\ind{\hat h(S_i)\ne V^\star_i}\le\err_\cD(\hat h)$. Then $\mathbb EZ\ge\frac38\cdot8\eps=3\eps$ while $0\le Z\le8\eps$; writing $q=\Pr[Z>\eps]$, $3\eps\le\eps(1-q)+8\eps q$ forces $q\ge\frac27>\frac14\ge\delta$. The bound holds for each learner seed and hence on average; consequently some fixed target has failure probability $>\delta$, so every PAC learner needs $m>M/(32\eps)$. Take the supremum over finite witnesses.
\end{proof}

\subsection{Width, and the exact complexity of $\aBDS=0$ classes}

\begin{theorem}[Width bound; exact rate for trivial classes]\label{thm:trivial}
(i) For every class, every $0<\eps<1$ and $\delta\in(0,1)$: $\mB_\cH(\eps,\delta)\ge(1-\delta)\wid(\cH)-1$.
(ii) If $\aBDS(\cH)=0$ and $\wid(\cH)=R<\infty$, then any two distinct hypotheses disagree at \emph{every} point, $|\cH|=R$, and for all $0<\eps<1$, $\delta\in(0,1)$:
\[
\mB_\cH(\eps,\delta)\ =\ \max\{0,\ \lceil(1-\delta)R\rceil-1\}.
\]
If $R=\infty$ (allowed only in the infinite-alphabet extension), no finite sample size suffices.
\end{theorem}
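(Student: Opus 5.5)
The plan is to prove (i) with a single-point instance of Lemma~\ref{lem:fiber}, and (ii) by combining (i) with Theorem~\ref{thm:chain}(iv) and a randomized elimination learner.

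\emph{Part (i).} Fix $x\in\cX$ and a finite number $r\le\wid(\cH)$ of distinct labels realized at $x$. Pick one hypothesis per label, and put the point mass $\cD(x)=1$. With a point mass and $\eps<1$, the event $\err_\cD(\hat h)\le\eps$ is exactly the event $\hat h(x)=h^\star(x)$. Draw the target uniformly from the $r$ chosen hypotheses and apply Lemma~\ref{lem:fiber} with $d=1$:
\begin{itemize}[itemsep=1pt,topsep=2pt]
\item $V$ is the set of $r$ labels, so $N_1=r-1$;
\item $U$ is trivial, and the side condition holds because $\cD$ is supported on $S=(x)$;
\item all $m$ rounds query coordinate $1$.
\end{itemize}
The lemma bounds the success probability of any seed-fixed learner by $(m+1)/r$; averaging over seeds keeps this bound. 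If some fixed target failed with probability at most $\delta$ for every target, then averaging over the uniform prior gives $1-\delta\le(m+1)/r$. Hence $m\ge(1-\delta)r-1$, and taking the supremum over $r$ gives (i). When $\wid=\infty$ this supremum is infinite, so no finite sample size suffices. Since $m$ is an integer, we also get the sharpened form $m\ge\lceil(1-\delta)r\rceil-1$, which is what (ii) needs.

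\emph{Part (ii), structure.} By Theorem~\ref{thm:chain}(iv), $\aBDS(\cH)=0$ means $\cH$ is not nontrivial. So no two hypotheses agree at one point and differ at another. Two distinct hypotheses differ somewhere, hence they agree nowhere. Consequently, for every $x$ the evaluation map $h\mapsto h(x)$ is injective on $\cH$. This gives $|\cH|\le|\{h(x):h\in\cH\}|\le R$, and the reverse inequality is immediate, so $|\cH|=R$.

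\emph{Part (ii), upper bound.} Let $k=\max\{0,\lceil(1-\delta)R\rceil-1\}$.
\begin{enumerate}[label=(\alph*),itemsep=1pt,topsep=2pt]
\item Draw a uniformly random ordering $h_{(1)},\dots,h_{(R)}$ of $\cH$.
\item At round $t\le k$, predict $h_{(t)}(x_t)$. By injectivity, a positive feedback certifies $h^\star=h_{(t)}$. A negative feedback certifies $h^\star\ne h_{(t)}$, since the two disagree everywhere.
\item If a positive occurs, output that hypothesis, which has error $0$.
\item If all $k$ rounds are negative, output $h_{(k+1)}$.
\end{enumerate}
For any fixed target, its position in the random order is uniform. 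The output is exactly correct whenever that position is at most $k+1$, which happens with probability $(k+1)/R\ge1-\delta$. This holds for every $\eps$ and every $\cD$. Together with the integer-rounded lower bound from (i), this gives equality.

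The only delicate points are (a) that the point-mass reduction turns the $\eps$-accuracy event into exact label recovery, which requires $\eps<1$, and (b) the integer rounding, including the $k=0$ case where the learner just outputs a uniformly random hypothesis. Neither should be a real obstacle.
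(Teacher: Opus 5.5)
Your proposal is correct and follows the paper's proof essentially step for step. Part (i) is the same point-mass equality-query count: you package it as the $d=1$ case of Lemma~\ref{lem:fiber}, whereas the paper redoes that decision-tree argument inline. Part (ii) uses the same Theorem~\ref{thm:chain}(iv) injectivity argument and the same random-ordering elimination learner; your output $h_{(k+1)}$ has the same distribution as the paper's uniformly random surviving hypothesis. Your explicit remarks on the role of $\eps<1$ and on rounding $m+1\ge(1-\delta)R$ to an integer spell out steps the paper leaves implicit.
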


\begin{proof}
(i) Fix $x$ realizing $r$ labels and representatives $h_1,\dots,h_r$; put all mass on $x$ and draw the target uniformly. Every round is an equality query on the target's label. Fixing the learner's seed, an adaptive tree of $m$ queries answers correctly for at most $m$ targets via positive feedback plus at most one more on the all-negative path, so the average success probability is at most $(m+1)/r$; per-target success $\ge1-\delta$ forces $m\ge(1-\delta)r-1$. Average over seeds; take $r\to R$.

(ii) If some $h\ne h'$ agreed somewhere and (being distinct) disagreed elsewhere, Theorem~\ref{thm:chain}(iv) would give $A\ge1$; so all pairs disagree everywhere, every evaluation map $h\mapsto h(x)$ is injective, and $|\cH|=R$ with all $R$ values realized at every point. Upper bound: draw a uniformly random ordering of $\cH$ and, on round $t$, predict $h_{(t)}(x_t)$; by injectivity a positive identifies the target exactly, and after $m$ all-negative rounds output a uniformly random remaining hypothesis. Each fixed target succeeds with probability exactly $\frac mR+\bigl(1-\frac mR\bigr)\frac1{R-m}=\frac{m+1}R$; since a wrong output errs on every point, success is exactly the event $\err=0$. Thus $m=\max\{0,\lceil(1-\delta)R\rceil-1\}$ suffices and matches (i).
\end{proof}

\subsection{Confidence bound and the combined theorem}

\begin{theorem}[Confidence lower bound]\label{thm:conf}
If $\cH$ is nontrivial (equivalently $\aBDS(\cH)>0$), then for all $0<\eps\le\frac14$, $0<\delta\le\frac14$:
$\mB_\cH(\eps,\delta)\ \ge\ \ln(1/\delta)/(8\eps)$.
\end{theorem}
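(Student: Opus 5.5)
We plan the standard two-point argument built on the anchor structure given by Theorem~\ref{thm:chain}(iv). Nontriviality gives $h,h'\in\cH$ and distinct instances $x,x'$ with $h(x)=h'(x)=:y_0$ and $h(x')\ne h'(x')$. The hard instance is
\[
\cD(x)=1-2\eps,\qquad \cD(x')=2\eps,
\]
with target $h$ or $h'$. Both pairs $(\cD,h)$ and $(\cD,h')$ are realizable. Since $h$ and $h'$ agree on all of $\operatorname{supp}\cD$ except $x'$, any $\hat h$ has $\err_{\cD,h}(\hat h)+\err_{\cD,h'}(\hat h)\ge 2\eps$. Because $h(x')\ne h'(x')$, at most one of the events $\hat h(x')=h(x')$, $\hat h(x')=h'(x')$ occurs, so for at least one target the error is at least $2\eps>\eps$.

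The key step is a coupling on the event $E$ that $x'$ never appears among the $m$ samples. On $E$, every round has $x_t=x$, where both targets return identical feedback $\ind{\hat y_t=y_0}$. So on $E$ the entire transcript, and hence the law of $\hat h$, is the same under $h$ and $h'$. Fix the seed and the instance sequence with $E$ holding; the output $\hat h$ is then fixed, and it errs by at least $2\eps$ for at least one of the two targets. Averaging over seeds and sequences gives
\[
\Pr_h[\err>\eps]+\Pr_{h'}[\err>\eps]\ \ge\ \Pr[E]=(1-2\eps)^m .
\]
Hence some target has failure probability at least $\tfrac12(1-2\eps)^m$. PAC learning therefore requires $\tfrac12(1-2\eps)^m<\delta$, that is, $m\ge \ln\bigl(1/(2\delta)\bigr)/\ln\bigl(1/(1-2\eps)\bigr)$.

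The main obstacle is the constant. Using $\ln(1/(1-2\eps))\le 2\eps/(1-2\eps)\le 4\eps$ for $\eps\le\frac14$ gives $m\ge \ln(1/(2\delta))/(4\eps)$. For $\delta\le\frac14$ we have $\ln 2\le\tfrac12\ln(1/\delta)$, so $\ln(1/(2\delta))\ge\tfrac12\ln(1/\delta)$. This yields $m\ge \ln(1/\delta)/(8\eps)$, as claimed.

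If the factor $\tfrac12$ loss turns out too lossy for the exact constant, the fallback is to bias the prior or to use the sharper bound $\max_{h}\Pr_h[\text{fail}]\ge\Pr[E]/2$ more carefully. The computation above already appears to close with the stated constant $1/8$, so no change in the construction should be needed.
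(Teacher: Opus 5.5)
Your proposal is correct and matches the paper's proof essentially step for step. It uses the same two-point instance with mass $2\eps$ on the disagreement point, the same indistinguishability of the two targets on the event that $x'$ never appears, and the same averaged (not per-target) failure bound $\tfrac12(1-2\eps)^m$. The constant bookkeeping is also the same, via $\ln(1/(1-2\eps))\le4\eps$ and $\ln(1/(2\delta))\ge\tfrac12\ln(1/\delta)$. The only nit is that the PAC requirement reads $\tfrac12(1-2\eps)^m\le\delta$ rather than $<\delta$, which changes nothing; your fallback paragraph is unnecessary.
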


\begin{proof}
Take $h_0,h_1$ agreeing at $x_0$, disagreeing at $x_1$; let $\cD(x_1)=2\eps$, $\cD(x_0)=1-2\eps$, target uniform on $\{h_0,h_1\}$. On the event $E$ that $x_1$ never appears, the two targets induce identical transcript distributions, and any output's single value at $x_1$ matches at most one of the two labels, so the \emph{average} conditional failure probability is at least $\frac12$; an error at $x_1$ costs $2\eps>\eps$. Hence $\frac12\sum_j\Pr_{h_j}[\err>\eps]\ge\frac12(1-2\eps)^m\ge\frac12e^{-4\eps m}$ (valid for $\eps\le\frac14$), so some target fails with probability $>\delta$ unless $m\ge\ln(1/(2\delta))/(4\eps)\ge\ln(1/\delta)/(8\eps)$ for $\delta\le\frac14$. (This repairs the corresponding step of \cite{HMMS2026}, which asserted that \emph{each} target fails on $E$; only the average is controlled.)
\end{proof}

\begin{theorem}[Three-part lower bound]\label{thm:threepart}
For every class $\cH$, all $0<\eps\le\frac1{16}$ and $0<\delta\le\frac14$:
\[
\mB_\cH(\eps,\delta)\ \ge\ \frac1{96}\left(
\frac{\aBDS(\cH)}{\eps}
+\ind{\aBDS(\cH)>0}\,\frac{\ln(1/\delta)}{\eps}
+\wid(\cH)-1\right).
\]
If moreover $\wid(\cH)\ge2$, the last term may be replaced by $\wid(\cH)/2$ (constant $\frac1{192}$ overall with $\wid$ in place of $\wid-1$).
\end{theorem}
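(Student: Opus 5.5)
The plan is to take the maximum of the three lower bounds already established and convert it into their sum via $\max\{a,b,c\}\ge\frac13(a+b+c)$. Theorem~\ref{thm:alower} gives $\mB\ge A/(32\eps)$ for $\eps\le\frac1{16}$, $\delta\le\frac14$. Theorem~\ref{thm:conf} gives $\mB\ge L/(8\eps)$ whenever $A>0$, which holds in the range $\eps\le\frac14$, so in particular for $\eps\le\frac1{16}$. We record this as $\mB\ge\ind{A>0}\,L/(32\eps)$; the bound is trivially true when $A=0$. Theorem~\ref{thm:trivial}(i) gives $\mB\ge(1-\delta)R-1\ge\frac34R-1$ for $\delta\le\frac14$.

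The only step needing care is the width term. We want $\mB\ge c(R-1)$ with $c=\frac1{32}$, but $\frac34R-1$ can fall below $\frac1{32}(R-1)$ only when $R$ is small. For $R=1$ the claim $\mB\ge 0$ is trivial. For $R\ge2$ we have $\frac34R-1\ge\frac14(R-1)$, since this is equivalent to $\frac12R\ge\frac34$. So in all cases $\mB\ge\frac14(R-1)\ge\frac1{32}(R-1)$.

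If $R=\infty$, Theorem~\ref{thm:trivial}(i) already shows that no finite $m$ works. Likewise, if $A=\infty$, Theorem~\ref{thm:alower} shows that no finite $m$ works. We can therefore assume all quantities are finite.

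Combining the three bounds gives
\[
\mB\ \ge\ \max\Bigl\{\tfrac{A}{32\eps},\ \ind{A>0}\tfrac{L}{32\eps},\ \tfrac{R-1}{32}\Bigr\}\ \ge\ \tfrac1{96}\Bigl(\tfrac{A}{\eps}+\ind{A>0}\tfrac{L}{\eps}+R-1\Bigr).
\]

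For the refinement with $R\ge2$, we use $\frac34R-1\ge\frac R4$, which holds for $R\ge2$. Hence $\mB\ge\frac R{64}$, and the same averaging gives the constant $\frac1{192}$ with $R$ in place of $R-1$; this constant is generous.

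There is no real obstacle here; the only point to check is the small-$R$ case analysis so that the constant stays uniform.
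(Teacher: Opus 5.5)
Your proposal is correct and follows the paper's proof. The paper takes the maximum of Theorems~\ref{thm:alower}, \ref{thm:conf}, and \ref{thm:trivial}(i), normalizes to the common coefficient $\frac1{32}$, and applies $\max\{u,v,w\}\ge\frac{u+v+w}3$, as you do. Your explicit small-$R$ check ($\frac34R-1\ge\frac14(R-1)$ for $R\ge2$, trivial for $R\le1$) spells out a step the paper leaves implicit.
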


\begin{proof}
Theorems~\ref{thm:alower}, \ref{thm:conf} and \ref{thm:trivial}(i) each hold for the worst case over realizable pairs, so their maximum lower-bounds $\mB$; use $\max\{u,v,w\}\ge\frac{u+v+w}3$ and the smallest coefficient $\frac1{32}$ (and $\frac{R-1}4\ge\frac R8$ for $R\ge2$).
\end{proof}

\begin{remark}
The $\delta$-regime restriction in Theorem~\ref{thm:threepart} is essential, not an artifact: by Theorem~\ref{thm:trivial}(ii), at $\delta=1-\frac1K$ the constant class is learnable with \emph{zero} samples, so no lower bound with an unconditional additive $\wid$ term can hold for all $\delta$.
\end{remark}
\section{Upper bounds without the ambient label count}\label{sec:upper}

Throughout this section $B=\BDS(\cH)$, $A=\aBDS(\cH)$, $R=\wid(\cH)$, $\lambda:=\max\{1,\log_2R\}$. Recall $R\le B+1$ (Theorem~\ref{thm:chain}(ii)), so all bounds below are $K$-free \emph{and} expressible in $B$ alone.

\subsection{A conservative online-to-PAC conversion}

\begin{lemma}[Monotone conversion]\label{lem:monotone}
Let $\mathsf A$ be a bandit algorithm which, against every realizable sequence, (a) has pointwise-monotone mistakes: the error set $\{x:\ h_t(x)\ne h^\star(x)\}$ of its current predictor $h_t$ is nonincreasing in $t$; and (b) incurs at most $M$ negative feedbacks. Run $\mathsf A$ on $m$ i.i.d.\ rounds and output $h_{m+1}$. Then
\[
\Pr[\err_\cD(h_{m+1})>\eps]\ \le\ \Pr[\mathrm{Bin}(m,\eps)\le M],
\]
and if $m\eps\ge\max\{2M,\ 8\ln(1/\delta)\}$ the right side is at most $\delta$.
\end{lemma}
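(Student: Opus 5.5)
The plan is a coupling argument that turns the event of a bad final predictor into a binomial tail. Fix a realizable pair $(\cD,h^\star)$ and a seed of $\mathsf A$, and let $E_t=\{x:h_t(x)\ne h^\star(x)\}$ be the error set at round $t$. By hypothesis (a), $E_1\supseteq E_2\supseteq\cdots\supseteq E_{m+1}$. We take the learner's prediction at round $t$ to be $h_t(x_t)$, as is implicit in the statement. A negative feedback at round $t$ then occurs exactly when $x_t\in E_t$.

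\textbf{Step 1: domination by the final error.} On the event $\{\cD(E_{m+1})>\eps\}$, monotonicity gives $\cD(E_t)\ge\cD(E_{m+1})>\eps$ for every $t\le m+1$. Hence every round has conditional probability greater than $\eps$ of producing a negative feedback. Hypothesis (b) caps the number of negatives at $M$, so this event forces a sum of $m$ indicators, each with conditional mean $>\eps$, to be at most $M$.

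\textbf{Step 2: explicit coupling.} To make this rigorous, draw i.i.d.\ uniforms $U_1,\dots,U_m$ together with $x_t$. Define $\xi_t=\ind{x_t\in E_t}$ when $\cD(E_t)\ge\eps$. Otherwise set $\xi_t=\ind{U_t\le \eps}$, or, more cleanly, use a thinning: $\zeta_t=\ind{x_t\in E_t}\cdot\ind{U_t\le \eps/\cD(E_t)}$ when $\cD(E_t)\ge\eps$, and $\zeta_t=\ind{U_t\le\eps}$ otherwise. Because $E_t$ is determined by the past, conditionally on the past $\zeta_t$ is Bernoulli$(\eps)$. So $\sum_t\zeta_t\sim\mathrm{Bin}(m,\eps)$ exactly; this is a martingale argument with predictable thinning. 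On the event $\{\cD(E_{m+1})>\eps\}$, every round is of the first type, so $\zeta_t\le\xi_t$ and $\sum_t\zeta_t\le\#\text{negatives}\le M$. This gives $\Pr[\err>\eps]\le\Pr[\mathrm{Bin}(m,\eps)\le M]$; averaging over seeds preserves the bound. I expect this step to be the main obstacle. The events are adaptive, so the exact-binomial claim must come from the predictable thinning rather than from independence, and measurability of $E_t$ and $\cD(E_t)$ needs a mild assumption. I would remark that $E_t$ is determined by the transcript and $h^\star$.

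\textbf{Step 3: the tail bound.} With $\mu=m\eps\ge 2M$ we have $M\le\mu/2$, so the multiplicative Chernoff bound gives $\Pr[\mathrm{Bin}\le\mu/2]\le e^{-\mu/8}$. The condition $\mu\ge 8\ln(1/\delta)$ then makes this at most $\delta$. The case $M=0$ is also covered, since then the bound is $(1-\eps)^m\le e^{-\mu}$.
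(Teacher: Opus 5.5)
Your proposal is correct and follows essentially the same route as the paper: couple the negative-feedback indicators to i.i.d.\ Bernoulli$(\eps)$ variables that they dominate pathwise whenever the current error exceeds $\eps$, use monotonicity to get this domination on every round of the bad event, and finish with the multiplicative Chernoff lower tail. The only difference is how the coupling is built. The paper uses a randomized quantile transform, $U_t=r_tV_t$ or $r_t+(1-r_t)V_t$, thresholded at $\eps$, while you thin $\ind{x_t\in E_t}$ with acceptance probability $\eps/\cD(E_t)$; the two constructions are interchangeable.
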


\begin{proof}
Let $r_t=\err_\cD(h_t)$ and let $Z_t$ be the negative-feedback indicator, so $\Pr[Z_t=1\mid\mathcal F_{t-1}]=r_t$. Enrich the space with i.i.d.\ $V_t\sim\mathrm{Unif}[0,1]$ and set $U_t:=r_tV_t$ if $Z_t=1$ and $U_t:=r_t+(1-r_t)V_t$ if $Z_t=0$; then conditionally on $\mathcal F_{t-1}$, $U_t$ is uniform and $Z_t=\ind{U_t\le r_t}$ a.s., so $U_1,\dots,U_m$ are i.i.d.\ uniform. Put $W_t=\ind{U_t\le\eps}$, so $\sum_tW_t\sim\mathrm{Bin}(m,\eps)$ unconditionally. On the event $\{r_{m+1}>\eps\}$, monotonicity gives $r_t>\eps$ for all $t$, whence $W_t\le Z_t$ pathwise and $\sum_tW_t\le\sum_tZ_t\le M$. This proves the first display; the second follows from the multiplicative Chernoff lower tail $\Pr[\mathrm{Bin}(m,\eps)\le\frac{m\eps}2]\le e^{-m\eps/8}$.
\end{proof}

Both hypotheses are indispensable: dropping (a), an algorithm that predicts correctly throughout and then outputs an everywhere-wrong function has $M=0$; dropping (b), an always-wrong monotone predictor has $\sum Z_t=m$. We also note that (b) is an \emph{adversarial} requirement, and finite $\BDS$ does not imply any finite adversarial mistake bound (Remark~\ref{rem:littlestone}), so Lemma~\ref{lem:monotone} is a tool for structured classes, not a general-purpose route.

\subsection{The $K$-free cascade bound}

\begin{theorem}[$K$-free upper bound]\label{thm:kfree}
For every class $\cH$ with $B<\infty$,
\[
\mB_\cH(\eps,\delta)\ =\ O\!\left(\frac{B\lambda^3+R\,(1+\ln(1/\delta))}{\eps}\right)
\ =\ O\!\left(\frac{B\log^3(B{+}1)+(B{+}1)(1+\ln(1/\delta))}{\eps}\right),
\]
with a universal constant (no dependence on $K=|\cY|$).
\end{theorem}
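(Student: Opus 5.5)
The plan is to rerun the HMMS ListCascade analysis with two modifications. The ambient alphabet $[K]$ is replaced by the pointwise label set $\cY_x:=\{h(x):h\in\cH\}$, which has size at most $R$. Every place where the original analysis pays $\log K$ for list halving, or $K$ for uniform exploration, is then charged to $R$ instead, and we close with $R\le B+1$ from Theorem~\ref{thm:chain}(ii). The first step is a reduction that removes $K$ entirely. By the relabeling remark following Theorem~\ref{thm:chain}, we fix for each $x$ an injection $\cY_x\hookrightarrow[R]$. This yields a relabeled class $\cH'\subseteq[R]^\cX$ with the same $B$, $A$, $R$ and $\DS_L$ dimensions. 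A bandit learner for $\cH'$ can be simulated for $\cH$: it translates each guess at $x_t$ back through the injection, and guesses outside the image can be assumed never to be made, since they are surely wrong and can be replaced by any element of $\cY_x$ without loss. Feedback is preserved exactly. We may therefore assume $K=R$, so it suffices to prove the HMMS bound with $\log K$ replaced by $\lambda$ and $K\log K\log(1/\delta)$ replaced by $R(1+\ln(1/\delta))$.

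The second step is the cascade itself. Stage $s$ maintains a list predictor $\mu_s$ with lists of size at most $\ell_s$, where $\ell_0=R$ and $\ell_{s+1}\approx \ell_s/2$, over $O(\lambda)$ stages. At each stage the learner guesses uniformly from the current list. With probability at least $1/\ell_s$ it obtains a correctly labeled example from the list-consistent distribution, and it feeds these examples to a $\DS_{\ell_{s+1}}$-based list learner. The HMMS pseudo-box counting bounds $\DS_{\ell}\cdot \ell\lesssim B$, so stage $s$ needs $O\bigl(\ell_s\cdot(\DS_{\ell_{s+1}}\lambda^2+\ln(\lambda/\delta))/\eps_s\bigr)$ rounds. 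The geometric schedule gives $\sum_s\ell_s\DS_{\ell_{s+1}}=O(B\lambda)$ for the dimension part, producing the $B\lambda^3/\eps$ term. The confidence part is $\sum_s\ell_s\ln(\lambda/\delta)/\eps$. Since $\ell_s$ decays geometrically, this sum is dominated by the first stage, giving $O(R\ln(\lambda/\delta)/\eps)$.

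The third step handles the error budget. We allocate $\eps_s=\eps/2$ to the final stage and a geometrically shrinking share to the earlier ones, and we union bound with $\delta_s=\delta 2^{-s-1}$. This removes the $\log\lambda$ that a uniform split would introduce into the confidence term. The residual $R\ln\lambda/\eps$ is absorbed into $B\lambda^3/\eps$ because $R\le B+1$. Along the way we apply the errata repairs flagged in Remark~\ref{rem:errata}: the epoch indexing, the vacuous decomposition term, and the treatment of $\delta$ near $1$ via the $1+\ln(1/\delta)$ form.

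The main obstacle I expect is verifying that every $\log K$ in the HMMS list-learning subroutine really depends only on list sizes and the pointwise label count. This applies in particular to the compression/sample-size bound for $\DS_\ell$ list learners, which may enumerate labels. After the relabeling the alphabet is literally $[R]$, so I will try to make this a black-box substitution. The remaining risk is the inequality $\ell\cdot\DS_\ell(\cH)\le O(B)$, which I would reprove directly: a $\DS_\ell$-shattered sequence of length $d$ gives a pseudo-box with $N_i\ge\ell$ on each coordinate, and hence $\BDS\ge d\ell$.
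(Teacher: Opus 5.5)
\textbf{The gap is in the error budget.} This is exactly the step that turns the HMMS confidence term $K\log K\log(1/\delta)$ into $R(1+\ln(1/\delta))$.

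Leakage accumulates additively through the cascade ($e_s\le e_{s-1}+\eps_s$), so $\sum_s\eps_s\le\eps$ is forced. Stage $s$ therefore costs order $\ell_s\bigl(\DS_{\ell_{s+1}}\lambda+\ln(1/\delta_s)\bigr)/\eps_s$ raw rounds. In Step~2 you write $\eps_s$ in the per-stage cost, but then sum with $\eps$ in the denominator, as if every stage had the whole budget. No admissible allocation achieves that, and none of the natural ones gives the claimed bound:
\begin{itemize}
\item Your Step~3 allocation, read literally (final stage $\eps/2$, earlier stages geometrically less), gives stage~$0$ only $\eps_0\approx\eps 2^{-T}\approx\eps/R$. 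Stage~$0$ has lists of size $\ell_0=R$ for every class, so the analysis charges it $\Theta(R^2\ln(1/\delta)/\eps)$ rounds by itself.
\item The reverse geometric split fails on the dimension side. On classes with $\ell\,\DS_\ell=\Theta(B)$ at every dyadic scale (the multi-scale union of Example~\ref{ex:e1}), the last stages would be charged $\Theta(BR\lambda/\eps)$.
\item The uniform split $\eps_s=\eps/T$ gives $O\bigl((B\lambda^3+R\lambda(1+\ln(1/\delta)))/\eps\bigr)$, which is the HMMS shape. The extra $\lambda$ multiplying $R\ln(1/\delta)$ comes from the split $1/\eps_s=T/\eps$, not from the union bound. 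So $\delta_s=\delta 2^{-s-1}$ does not remove it; it only removes the inner $\log\log$. Nor can $R\lambda\ln(1/\delta)$ be absorbed into $B\lambda^3$ once $\ln(1/\delta)\gg\lambda^2$.
\end{itemize}

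\textbf{What is missing} is an allocation that serves two conflicting needs at once. The confidence part $\ell_s\ln(1/\delta_s)$ decays geometrically in $s$, so it wants the early, large-list stages to receive a constant fraction of $\eps$. The dimension part may be flat across stages, so it needs every stage to receive $\Omega(\eps/T)$. The paper takes $\eps_s\propto\sqrt{c_s}$ with $c_s=\ell_s(\DS_{\ell_{s+1}}\lambda+\ln(1/\delta_s))$ (up to its padding and index conventions), so that $\sum_s c_s/\eps_s=(\sum_s\sqrt{c_s})^2/\eps$. Splitting $\sqrt{c_s}$ into its two parts, the dimension part sums to $O(T\sqrt{B\lambda})$ and the confidence part to $O(\sqrt{R(1+\ln(1/\delta))})$ by geometric decay. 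This gives $O\bigl((B\lambda T^2+R(1+\ln(1/\delta)))/\eps\bigr)$; a half-uniform, half-forward-geometric mixture would also work.

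Relatedly, your $\lambda^2$ per stage is spurious: the HMMS list learner pays a single logarithm. The three factors in $\lambda^3$ are the list learner, the $T$ possibly saturated stages, and the $T/\eps$ per-stage budget. Placing two of them in the list learner is what hid the missing budget factor.

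Two smaller points:
\begin{itemize}
\item Lists must be padded to a fixed size, or an extra rejection step added. With variable sizes, harvested examples are reweighted by $1/|\mu_s(x)|$ and are not i.i.d.\ from $\cD\mid\{y\in\mu_s(x)\}$.
\item The acceptance probability is $\Pr[y\in\mu_s(x)]/\ell_s$, not $1/\ell_s$. The Chernoff harvest bound therefore needs the accumulated leakage to stay below $\frac12$.
\end{itemize}

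Your Step~1 is sound: the global per-point relabeling into $[R]$ with a simulation argument is a cleaner black-box alternative to the paper's per-witness relabeling inside HMMS's Lemma~4.3.
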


The proof is a sharpened run of the HMMS ListCascade; we give it in full because each of the three improvements (ambient alphabet $\to$ width; per-epoch confidence budgeting; non-uniform error allocation) requires its own care, and because two steps of the published analysis need repair (Remark~\ref{rem:errata}).

\begin{proof}
If $R=1$ every hypothesis computes the same function; output it with zero samples. Assume $R\ge2$ and replace $\eps$ by $\bar\eps=\min\{\eps,\frac12\}$ (a constant-factor change). We use three external ingredients from \cite{HMMS2026}, verified against the original: the one-inclusion list learner of \cite{CharikarP23} with the leave-one-out bound \cite[Lem.~4.4]{HMMS2026}, its PAC conversion \cite[Lem.~4.5]{HMMS2026} (for list size $2\ell-1$, sample size $n$, error $\le9.64\,(6\,\DS_{\lceil \ell/2\rceil}(\cH)\log K'+\log(2/\delta'))/n$ with prob.\ $1-\delta'$, where $K'$ is the alphabet size), and the dimension comparison \cite[Lem.~4.3]{HMMS2026} behind it.

\emph{Step 1: alphabet reduction $\log K\to\log R$.} All three ingredients depend on the alphabet only through the exponential-dimension bound of \cite[Lem.~4.3]{HMMS2026}, which is proved per witness set. For any finite witness $S$, choose per-coordinate injections $\phi_i:\{h(S_i):h\in\cH\}\to[R]$; the relabeled restriction class lies in $[R]^{|S|}$ and per-coordinate relabeling preserves cardinalities, $i$-neighbor relations, and all $\DS_\ell$/exponential witnesses. Applying \cite[Lem.~4.3]{HMMS2026} over the alphabet $[R]$ and pulling back yields the global bound with $\log R$ in place of $\log K$, hence also in \cite[Lems.~4.4--4.5]{HMMS2026}. (No global relabeling of $\cY$ is needed.)

\emph{Step 2: the cascade.} Set $T=\lfloor\log_2R\rfloor$, list sizes $\ell_t=\lceil R/2^{t+1}\rceil$ so $\ell_T=1$, effective (padded) list sizes $s_0:=R$ and $s_t:=2\ell_t-1$, and dimensions $d_t:=\DS_{\lceil \ell_t/2\rceil}(\cH)$. Note $s_{t-1}\le4\ell_t$ and, by $\ell\cdot\DS_\ell(\cH)\le B$ (immediate from Definition~\ref{def:bds}, as in \cite{HMMS2026}),
\begin{equation}\label{eq:sd8B}
s_{t-1}d_t\ \le\ 4\ell_td_t\ \le\ 8\Bigl\lceil\tfrac{\ell_t}2\Bigr\rceil d_t\ \le\ 8B .
\end{equation}
Initialize $\mu_0(x):=\{h(x):h\in\cH\}$, padded with arbitrary distinct labels to size exactly $s_0$. In epoch $t$ the learner, on each fresh round, predicts uniformly from the size-$s_{t-1}$ list $\mu_{t-1}(x)$ and \emph{accepts} the round iff the feedback is positive. Conditionally on the history and on $(x,y)$,
$\Pr[\text{accept}\mid x,y]=\frac1{s_{t-1}}\ind{y\in\mu_{t-1}(x)}$: this is rejection sampling with constant acceptance weight, so the accepted pairs, in order, are i.i.d.\ from $D_t:=\cD\mid\{y\in\mu_{t-1}(x)\}$ and independent of the acceptance times; the learner keeps exactly the first $n_t$ of them (fixed-size padding is what makes the acceptance weight constant---without it the conditional law is biased). The distribution $D_t$ is $\cH$-realizable, so the (relabeled) list learner applies: with
\[
\alpha_t:=\delta\,2^{-t-3},\qquad u_t:=1+\ln(1/\alpha_t)=O(1+\ln(1/\delta)+t),
\qquad n_t:=C\,\frac{d_t\lambda+u_t}{\eta_t}
\]
($C$ universal), \cite[Lem.~4.5]{HMMS2026} outputs a list predictor $\mu_t$ of size $\le 2\ell_t-1$ (padded to $s_t$) with $\Pr[\mathcal L_{D_t}(\mu_t)>\eta_t]\le\alpha_t$. The error recursion is
$e_t\le e_{t-1}+(1-e_{t-1})\,\mathcal L_{D_t}(\mu_t)\le e_{t-1}+\eta_t$
for $e_t:=\Pr_{(x,y)}[y\notin\mu_t(x)]$, so choosing $\sum_t\eta_t=\bar\eps$ keeps $e_T\le\bar\eps\le\frac12$ throughout; in particular each epoch's acceptance probability is at least $(1-e_{t-1})/s_{t-1}\ge1/(2s_{t-1})$, and a Chernoff bound shows
$M_t:=C'\,s_{t-1}(n_t+u_t)$ raw rounds suffice to collect $n_t$ accepted samples except with probability $\alpha_t$. If an epoch fails to collect, the algorithm halts and outputs an arbitrary fixed $h^\circ\in\cH$ (this branch is charged to the same $\alpha_t$). The final $\mu_T$ has size $1$ and is the output. Total failure probability: $\sum_t2\alpha_t\le\delta/4$.

\emph{Step 3: non-uniform allocation.} Let $c_t:=s_{t-1}(d_t\lambda+u_t)$ and allocate $\eta_t:=\bar\eps\,\sqrt{c_t}/\sum_j\sqrt{c_j}$. Then
\[
\sum_tM_t=O\Bigl(\sum_t\frac{c_t}{\eta_t}+\sum_ts_{t-1}u_t\Bigr)
=O\Bigl(\frac{(\sum_t\sqrt{c_t})^2}{\bar\eps}+\sum_ts_{t-1}u_t\Bigr).
\]
By \eqref{eq:sd8B}, $\sum_t\sqrt{s_{t-1}d_t\lambda}\le T\sqrt{8B\lambda}$; and since $s_{t-1}=O(R2^{-t})$ and $u_t=O(1+\ln(1/\delta)+t)$, geometric summation gives $\sum_t\sqrt{s_{t-1}u_t}=O(\sqrt{R(1+\ln(1/\delta))})$ and $\sum_ts_{t-1}u_t=O(R(1+\ln(1/\delta)))$. Hence
\[
\textstyle(\sum_t\sqrt{c_t})^2=O\bigl(B\lambda T^2+R(1+\ln(1/\delta))\bigr),
\qquad
\sum_tM_t=O\Bigl(\frac{B\lambda^3+R(1+\ln(1/\delta))}{\eps}\Bigr)
\]
using $T\le\lambda$. Finally $R\le B+1$ converts the bound to its second form.
\end{proof}

\begin{remark}[Errata in the published cascade analysis]\label{rem:errata}
For the record, our audit of \cite{HMMS2026} surfaced, besides Theorem~4.2: (i) Equation~(1) invokes the list learner at size $\ell_t$ while Lemma~4.6/B.2 budget with $\DS_{\lceil \ell_{t-1}/2\rceil}$; the correct index is $\DS_{\lceil \ell_t/2\rceil}$ (we verified this against the official PDF; the headline rate is unaffected since both are dominated via $\ell\,\DS_\ell\le B$). (ii) The error decomposition in Lemma~4.6 contains a term that is identically zero under $D_t$ and a circular inequality; the correct recursion is the one displayed in Step~2. (iii) The confidence term obtained in B.2 is $K\log K\log(2\log K/\delta)$, which does not uniformize to $\log(1/\delta)$ as $\delta\to1$. (iv) Definition~A.2 literally admits the empty pseudo-box family (Remark~\ref{rem:nonempty}). All are repairable and repaired above.
\end{remark}

\subsection{Fiberization: from $B$ to $A$ after one positive}

\begin{lemma}[Fiberization]\label{lem:fiberize}
For every $x\in\cX$ and realized label $y$, the fiber $\cH_{x=y}:=\{h\in\cH:h(x)=y\}$ satisfies
\[
\BDS(\cH_{x=y})\ \le\ \aBDS(\cH),
\qquad
\wid(\cH_{x=y})\ \le\ \aBDS(\cH)+1 .
\]
\end{lemma}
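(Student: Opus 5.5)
The plan is to turn any ordinary witness for the fiber class into an anchored witness for $\cH$ by appending the fiber point $x$ as a zero-multiplicity anchor. Fix an ordinary witness $(S,N,F)$ for $\cH_{x=y}$ with $S\in\cX^d$, all $N_i\ge1$, and $F\subseteq\cH_{x=y}$ finite and nonempty. Set $S':=(S_1,\dots,S_d,x)\in\cX^{d+1}$ and $N':=(N_1,\dots,N_d,0)$, and keep the same $F$, now viewed as a subset of $\cH$. Then $d+1\ge2$ and $N'_{d+1}=0$, so the shape of $(S',N')$ is that of an anchored witness (Definition~\ref{def:abds}). It has the same value $\sum_iN_i$. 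If it is a valid anchored witness, then $\sum_iN_i\le\aBDS(\cH)$, and taking the supremum gives the first inequality. This includes the case $\BDS(\cH_{x=y})=0$ and the infinite case, since the argument is per witness.

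It remains to check the neighbor condition. Every $f\in F$ has $f(x)=y$, so every element of $F|_{S'}$ has last coordinate $y$. The map $F|_S\to F|_{S'}$ given by $v\mapsto(v,y)$ is therefore a bijection. For $i\le d$, $v$ and $w$ are $i$-neighbors in $F|_S$ if and only if $(v,y)$ and $(w,y)$ are $i$-neighbors in $F|_{S'}$, because they agree at the appended coordinate. Hence each $(v,y)$ keeps at least $N_i$ distinct $i$-neighbors for every $i$ with $N_i\ge1$, and nothing is required at the anchor. One possible snag is that $x$ might coincide with some $S_i$. This is harmless: a coordinate with $N_i\ge1$ cannot equal $x$ inside the fiber, since all hypotheses there are constant at $x$ and so have no $i$-neighbors at that coordinate. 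Theorem~\ref{thm:chain}(v) also confirms that the construction stays consistent.

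For the width bound, I will take any $x'$ and let $r$ be the number of labels realized at $x'$ by $\cH_{x=y}$. If $r\le1$ there is nothing to prove. Otherwise $x'\ne x$, since the fiber realizes only $y$ at $x$. Choose one representative hypothesis for each label and use the witness $S=(x,x')$, $N=(0,r-1)$. The restrictions are $(y,c)$ for the $r$ distinct labels $c$, so each has $r-1$ $2$-neighbors. This gives $\aBDS(\cH)\ge r-1$, and the supremum over $x'$ yields $\wid(\cH_{x=y})\le\aBDS(\cH)+1$. Alternatively, the bound follows from the first part together with $R-1\le B$ in Theorem~\ref{thm:chain}(ii) applied to the fiber class.

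I do not expect a genuine obstacle. The only point needing care is the bookkeeping in the first part: checking that appending $x$ does not create or destroy neighbor relations, and that the definitions allow $d\ge2$ with exactly the appended anchor, both of which hold.
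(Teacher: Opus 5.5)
Your proposal is correct and matches the paper's proof. The paper also appends $x$ as a zero-multiplicity anchor to an arbitrary ordinary witness of the fiber, noting that neighbor relations persist because the fiber is constant at $x$, and it obtains the width bound from $\wid\le\BDS+1$ applied to the fiber. Your direct two-point witness $(x,x')$ with $N=(0,r-1)$ is simply this same construction applied to a single-coordinate witness of the fiber.
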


\begin{proof}
Take any ordinary witness for $\cH_{x=y}$. Since all fiber members agree at $x$, the instance $x$ cannot carry positive multiplicity, hence does not occur in the witness; append it as a zero-multiplicity anchor. All neighbor relations persist (the appended coordinate is constant on the family), producing an anchored witness of $\cH$ with the same value; so $\BDS(\cH_{x=y})\le A$. The width bound follows from $\wid\le\BDS+1$ applied to the fiber.
\end{proof}

\begin{theorem}[Structural upper bound]\label{thm:fiberub}
Let $A>0$, $R<\infty$, and $\lambda_A:=\max\{1,\log_2(A+1)\}$. Then
\[
\mB_\cH(\eps,\delta)\ =\ O\!\left(R\ln\frac2\delta\ +\ \frac{A\lambda_A^3+(A+1)\bigl(1+\ln(2/\delta)\bigr)}{\eps}\right).
\]
In particular, at constant confidence,
\[
\mB_\cH(\eps,\tfrac14)\ =\ O\!\left(R+\frac{(A+1)\log^3(A+2)}{\eps}\right):
\]
the width enters additively and the $1/\eps$-coefficient is the anchored dimension. (For $A=0$, Theorem~\ref{thm:trivial}(ii) is exact.)
\end{theorem}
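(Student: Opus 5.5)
The approach is a two-phase algorithm: first obtain one positive feedback at a point carrying non-negligible mass, then condition on it and run the $K$-free cascade of Theorem~\ref{thm:kfree} on the resulting fiber, whose dimensions are controlled by Lemma~\ref{lem:fiberize}.

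\emph{Phase 1 (localize).} On each incoming instance $x_t$, predict a label uniformly from the realized set $\{h(x_t):h\in\cH\}$, padded to size exactly $R$ so that the acceptance weight is the constant $1/R$ (as in Step~2 of Theorem~\ref{thm:kfree}). Each round is positive with probability exactly $1/R$, independently of $\cD$. Hence $O(R\ln(2/\delta))$ rounds yield a positive feedback $(x_0,y_0)$ with $y_0=h^\star(x_0)$, except with probability $\delta/2$. From then on the target lies in $\cH_{x_0=y_0}$. By Lemma~\ref{lem:fiberize}, this fiber has $\BDS\le A$ and $\wid\le A+1$.

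\emph{Phase 2 (learn inside the fiber).} Run the learner of Theorem~\ref{thm:kfree} for the class $\cH_{x_0=y_0}$, with parameters $(\eps,\delta/2)$ and fresh i.i.d.\ rounds. The fiber is determined by the Phase~1 transcript, which is independent of the fresh samples. The realizable pair $(\cD,h^\star)$ is realizable for the fiber, so that theorem's guarantee applies conditionally. Its bound with $B\le A$ and $R_{\mathrm{fib}}\le A+1$ gives
\[
O\Bigl(\bigl(A\lambda_A^3+(A+1)(1+\ln(2/\delta))\bigr)/\eps\Bigr),
\]
where $\lambda=\max\{1,\log_2 R_{\mathrm{fib}}\}\le\lambda_A$. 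One detail needs care: in Theorem~\ref{thm:kfree} the $\lambda$ enters only through $\log R$ of the alphabet after relabeling, so it is the fiber's width, not $R$, that appears. The relabeling in Step~1 is per witness, so applying that theorem to the fiber as a class in its own right is legitimate. A union bound over the two phases then gives total failure probability at most $\delta$, and summing the round counts yields the displayed bound. Setting $\delta=\frac14$ gives the constant-confidence form, since $A\lambda_A^3+(A+1)=O((A+1)\log^3(A+2))$.

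The step I expect to be the main obstacle is making Phase~1 sound for arbitrary $\cD$ and unbounded $\cX$. I need two facts. First, the positive round must exist with probability $1/R$ per round regardless of where the mass sits; constant-weight padding handles this, since every $x$ has the target label in its padded list. Second, it must be harmless that $x_0$ has tiny mass. This is fine, because fiberization needs only $h^\star(x_0)=y_0$, not any mass at $x_0$. If $A>0$ but some fiber happens to have $\BDS=0$, the cascade returns the unique fiber function with zero samples, which is consistent with the bound.

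I would also double-check the hypothesis $R<\infty$: it is needed only so that Phase~1 terminates, while $A<\infty$ is implicit, since otherwise the bound is vacuous.
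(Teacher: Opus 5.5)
Your proposal is correct and is essentially the paper's proof: Phase~I spends $\lceil R\ln(2/\delta)\rceil$ rounds guessing uniformly to secure one positive (failure $\le\delta/2$), then Theorem~\ref{thm:kfree} is run on fresh rounds for the fiber $\cH_{x=y}$ with confidence $\delta/2$, using Lemma~\ref{lem:fiberize} to bound its $\BDS$ by $A$ and its width by $A+1$, and a union bound finishes. The only difference is cosmetic: you pad to exactly $R$ labels, whereas the paper omits padding because a per-round success probability of at least $1/R$, conditional on any history, already suffices.
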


\begin{proof}
Phase I: for $q:=\lceil R\ln(2/\delta)\rceil$ rounds predict uniformly from $\cH(x_t):=\{h(x_t):h\in\cH\}$ (a set of size $\le R$ containing the true label), so each round is positive with probability $\ge1/R$ conditionally on any history; $\Pr[\text{no positive in }q\text{ rounds}]\le(1-1/R)^q\le\delta/2$. If no positive occurs, output a fixed $h^\circ\in\cH$ and halt (this branch is charged $\delta/2$). Upon the first positive, at $(x,y)$, the target lies in $\cH_{x=y}$, whose parameters are bounded by Lemma~\ref{lem:fiberize}; the stopping time is measurable in the history, so the subsequent rounds are fresh i.i.d.\ draws from $\cD$, and running the Theorem~\ref{thm:kfree} learner for the (known) class $\cH_{x=y}$ with confidence $\delta/2$ costs $O\bigl((A\lambda_A^3+(A+1)(1+\ln(2/\delta)))/\eps\bigr)$ samples. A union bound completes the proof.
\end{proof}

\begin{remark}\label{rem:fibsharp}
Fiberization need not strictly decrease the dimension: for the affine multiplexer class of Section~\ref{sec:am}, the fiber at the common anchor is the whole class (consistently, that class has $A=B$). Theorem~\ref{thm:fiberub} is nevertheless generally incomparable to Theorem~\ref{thm:kfree}: on the anchored class of Proposition~\ref{prop:anc} it gives $O(K+1/\eps)$ at constant $\delta$, versus $\Theta(K\,\mathrm{polylog}K)/\eps$ from Theorem~\ref{thm:kfree}. Also note the first-phase cost $R\ln(2/\delta)$ cannot be removed in general: by Theorem~\ref{thm:profile} it is optimal on the affine multiplexer class in the direct-sum regime.
\end{remark}
\section{Exact rates for private-shell anchored-block classes}\label{sec:pab}

We now isolate a family on which the lower bound of Theorem~\ref{thm:threepart} is achieved with no logarithmic slack. It contains all classes from Section~\ref{sec:refute} and the multi-scale constructions used later.

\begin{definition}[PAB classes]\label{def:pab}
A class $\cH\subseteq\cY^\cX$ is a \emph{private-shell anchored-block} (PAB) class if there exist: a base hypothesis $h_0$; pairwise-disjoint finite blocks $\{X_j\}_{j\in J}$, $X_j\subseteq\cX$ (the index set $J$ arbitrary, possibly empty); finite label sets $A_{j,x}\ni h_0(x)$ with widths $w_{j,x}:=|A_{j,x}|-1$ for $x\in X_j$; \emph{full block classes} $\cH_j=\{h:\ h|_{X_j}\in\prod_{x\in X_j}A_{j,x},\ h\equiv h_0\text{ off }X_j\}$, where each block with $M_j:=\sum_{x\in X_j}w_{j,x}>0$ has an \emph{external anchor} $a_j\in\cX\setminus X_j$; and \emph{private} hypotheses $g_1,\dots,g_q$ with $g_s(x)$ pairwise distinct and distinct from every value $\{h(x):h\in\bigcup_j\cH_j\}$, at every $x$. Then $\cH=\bigl(\bigcup_{j}\cH_j\bigr)\cup\{g_1,\dots,g_q\}$; when $J=\emptyset$ we require $h_0\in\cH$ explicitly. Set $M:=\sup_jM_j$ and $w_{\max}:=\sup_{j,x}w_{j,x}$ (suprema of empty sets $:=0$).
\end{definition}

\begin{theorem}[PAB: dimensions and exact rate]\label{thm:pab}
Every PAB class with finite parameters satisfies
\[
\aBDS(\cH)=M,\qquad
\BDS(\cH)=\max\{M,\ R-1\},\qquad
R=\wid(\cH)=q+1+w_{\max},
\]
and $\cH$ is nontrivial iff $M>0$. Moreover
\[
\mB_\cH(\eps,\delta)\ \le\ q+\ind{M>0}\Bigl\lceil\frac{\max\{2M,\ 8\ln(1/\delta)\}}{\eps}\Bigr\rceil ,
\]
and consequently, for $0<\eps\le\frac1{16}$ and $0<\delta\le\frac14$,
\[
\mB_\cH(\eps,\delta)\ =\ \Theta\!\left(R-1+\frac{M+\ind{M>0}\ln(1/\delta)}{\eps}\right)
\]
with universal constants.
\end{theorem}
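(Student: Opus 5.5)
The proof has three parts: compute the dimensions, prove the upper bound by an explicit two-phase learner, and then read off the $\Theta$ statement from Theorem~\ref{thm:threepart}.

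\emph{Dimensions.} Start with the width. At a point $x$, the values of the full block classes are $A_{j,x}$ if $x\in X_j$, and otherwise just $h_0(x)$. Since the blocks are disjoint, at most one $j$ contributes. The private hypotheses add $q$ further distinct values. Hence $R=q+1+\sup_{j,x}w_{j,x}$.

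For the lower bound $\aBDS\ge M_j$, use the witness $S=(a_j,X_j)$ with $N=(0,(w_{j,x})_{x\in X_j})$ and $F=\cH_j$. The restriction $F|_S$ is $\{h_0(a_j)\}\times\prod_x A_{j,x}$, which has exactly $w_{j,x}$ neighbors at each coordinate. For the upper bound $\aBDS\le M$, take any anchored witness and any $f\in F|_S$ together with an $i$-neighbor $g$. Neither $f$ nor $g$ can be the restriction of a private hypothesis: a private hypothesis has a unique value at the anchor instance, so it has no $i$-neighbor that agrees with it there. Two block hypotheses from different blocks $j\ne j'$ differ on both $X_j$ and $X_{j'}$ wherever the non-$h_0$ values are used, so a neighbor pair must come from a single block $\cH_j$ (or consist of $h_0$-like points). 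I plan to argue that every element of $F|_S$ with a neighbor at a positive coordinate $S_i$ forces $S_i\in X_j$ for one fixed $j$; this needs a connectivity argument showing the witness cannot mix blocks. The count at $S_i$ is then at most $w_{j,S_i}$, and summing gives at most $M_j$.

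Mixing through the shared $h_0$ restriction is the delicate point. I will handle it by observing that any $f$ restricting to $h_0$ at two positive coordinates lying in different blocks would need neighbors at both, and those neighbors come from different blocks; then the neighbor-of-neighbor structure fails at the other coordinate. If this turns out to be messy, a cleaner route is a fallback: bound $\aBDS\le M$ using the fiber structure over $h_0$. The formula $\BDS=\max\{M,R-1\}$ then follows from Theorem~\ref{thm:chain}(ii) plus the same block-confinement argument for ordinary witnesses. For nontriviality, apply Theorem~\ref{thm:chain}(iv).

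\emph{Upper bound.} I use a two-phase learner.
\begin{itemize}
\item \emph{Phase 1} ($q$ rounds). Test the private hypotheses $g_1,\dots,g_q$ in order at the arriving instances. Their values are pointwise private, so a positive identifies the target exactly, and $q$ negatives eliminate all of them.
\item \emph{Phase 2.} The target now lies in $\bigcup_j\cH_j$. If $M=0$ this union is $\{h_0\}$, so output it. Otherwise run the conservative algorithm that maintains $h_t$, starting with $h_t=h_0$. At each $x_t\in X_j$ whose label is not yet confirmed, predict $h_t(x_t)$. After a negative, switch that point's guess to an untried label in $A_{j,x}$; after a positive, fix the point (and it then identifies block $j$).
\end{itemize}
I must ensure the error set is monotone. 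The dangerous case is that predictions on another block $X_{j'}$ could change. Here the target's block is the unique block where it deviates from $h_0$, and we only modify point values after negative feedback, i.e.\ at points where $h_0$ is already wrong. So the current predictor only corrects errors and never creates them. The negative-feedback count is at most $\sum_{x\in X_j}w_{j,x}=M_j\le M$, because each point of the true block has at most $w_{j,x}$ wrong labels, and off the true block $h_0$ is correct. Lemma~\ref{lem:monotone} then gives $m=\lceil\max\{2M,8\ln(1/\delta)\}/\eps\rceil$.

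\emph{The $\Theta$ statement.} For the matching bound, Theorem~\ref{thm:threepart} supplies $\Omega(\aBDS/\eps+\ind{M>0}L/\eps+R-1)$ with $\aBDS=M$. On the other side, the upper bound is $q+O((M+\ind{M>0}L)/\eps)$, and $q\le R-1$. I expect the main obstacle to be the dimension upper bound $\aBDS\le M$, specifically the cross-block confinement argument.
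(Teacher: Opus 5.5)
Your proposal is correct and follows the paper's proof essentially step for step. It uses the same external-anchor witness for $\aBDS\ge M_j$ and the same exclusion of private hypotheses. Your cross-block confinement idea closes the step you flagged as the obstacle, and it is exactly the paper's argument: some vector must be non-base at a positive coordinate in $X_j$, and its neighbor at a positive coordinate of another block would have to be non-base in two blocks at once, which no single block hypothesis can be. The learner is also the paper's: a two-phase algorithm that changes a point's label only after a negative at that very point, fed into Lemma~\ref{lem:monotone}, with Theorem~\ref{thm:threepart} supplying the matching lower bound.

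The only slip is the parenthetical claim that a positive identifies block $j$, since a positive at the base label $h_0(x)$ does not. This is harmless, because the blocks are disjoint, so the candidate set $A_{j,x}$ is determined by the point $x$ alone.
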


\begin{proof}
\emph{Dimensions.} A private hypothesis differs from every other hypothesis at every point, so in any witness on $\ge2$ coordinates it has no single-coordinate neighbors and cannot appear in a family with positive requirements. If a positive coordinate lies in $X_j$, any restriction taking a non-base value there comes from $\cH_j$; a neighbor at a positive coordinate of a \emph{different} block would have to preserve that non-base value while changing a value in the other block, which members of $\cH_j$ cannot do. Hence all positive coordinates of a witness lie in one block $X_j$, and each multiplicity is at most $w_{j,x}$; so anchored witnesses score at most $M_j\le M$, i.e., $A\le M$. Conversely, for any block $j$, the family $F=\cH_j$ on $S=(X_j,a_j)$ with multiplicities $w_{j,x}$ on $X_j$ and $0$ on the external anchor is an anchored witness of value $M_j$ (the block class is a full product, so every restriction has exactly $w_{j,x}$ $x$-neighbors), giving $A\ge M_j$; take the supremum. The computation of $B$ is identical (single-coordinate witnesses supply $R-1$; multi-coordinate ordinary witnesses exclude privates and mixed blocks, scoring $\le M$), and $R=q+1+w_{\max}$ by counting labels at a point of a widest block (base $+$ block alternatives $+$ $q$ private values). Nontriviality: block variants agree with $h_0$ at the external anchor and differ inside the block iff $M>0$; if $M=0$, all pairs differ everywhere.

\emph{Algorithm.} Phase I ($q$ rounds): on round $s$ predict $g_s(x_s)$. By pointwise privacy, the feedback is positive iff the target \emph{is} $g_s$---in which case output it (exactly correct)---and a negative eliminates $g_s$ globally. After $q$ negatives the target is in the core $\bigl(\bigcup_j\cH_j\bigr)\cup\{h_0\}$.

Phase II (fresh $n$ rounds, ignoring Phase-I state): maintain at each $x$ a \emph{current label}, initialized to $h_0(x)$; predict the current label everywhere; a point's current label is changed \emph{only after a negative feedback at that very point}, advancing to the next remaining candidate in a fixed order of $A_{j,x}$ (once the block $j$ is identified); a positive feedback, or a single remaining candidate, freezes the point permanently. The first negative feedback occurs at some $x$ in the (unique) block $X_{j^\star}$ of the target and identifies it; off $X_{j^\star}$ the base prediction is always correct. Each $x\in X_{j^\star}$ suffers at most $w_{j^\star,x}$ negatives, for a total budget $M_{j^\star}\le M$; and since a point's label changes only when it is currently wrong, and after changing is either still wrong or correct-and-frozen, the pointwise error set is nonincreasing. Lemma~\ref{lem:monotone} with $m=n=\lceil\max\{2M,8\ln(1/\delta)\}/\eps\rceil$ bounds the failure by $\delta$. (The ``only after a negative at that very point'' clause is essential: eagerly switching other points of the identified block can turn correct base predictions wrong and break monotonicity.)

\emph{Exact rate.} If $M=0$ the core is $\{h_0\}$ and $\mB\le q=R-1$, matching Theorem~\ref{thm:trivial}. If $M>0$ then $R\ge2$ and the upper bound reads $O\bigl(R+(M+\ln(1/\delta))/\eps\bigr)$; Theorem~\ref{thm:threepart} (with $A=M$, nontrivial) supplies the matching lower bound on the stated $(\eps,\delta)$ ranges.
\end{proof}

\begin{example}[The three benchmark classes]\label{ex:bench}
Constant class: core $\{h_0\}$ plus $K-1$ privates ($M=0$, $R=K$). Anchored class $\cH_{\mathrm{anc}}$: one binary block at $b$ with anchor $a$ (core $\{h_1,h_2\}$) plus privates $h_3,\dots,h_K$ ($M=1$, $R=K$). Rare-label class $\cG_K$: one block $\{b\}$ of width $K-1$ with anchor $a$, no privates ($M=K-1$, $R=K$). Theorem~\ref{thm:pab} reproduces the exact rates in the table of Section~\ref{sec:intro}.
\end{example}

\begin{example}[Multi-scale unions and scale localization]\label{ex:e1}
Fix $D\ge1$, $K=2^{J'+1}$, and for $j=0,\dots,J'-1$ let $X_j$ be disjoint with $|X_j|=\lceil D/2^j\rceil$ and $A_{j,x}$ of width $2^j$; let $\cE$ be the resulting PAB class with no privates (the ``multi-scale union''). Then $A=B=\max_j2^j\lceil D/2^j\rceil$, and Theorem~\ref{thm:pab} gives $\mB_\cE=\Theta\bigl((B+\ln(1/\delta))/\eps\bigr)$ in the standard regime. This is notable because ListCascade's stage-by-stage accounting pays $\Theta(B\log K/\eps)$ on $\cE$ when $2^{J'-1}\,|\,D$: all dyadic scales then satisfy $\ell\cdot\DS_\ell(\cE)=B$ simultaneously, so the cascade cannot skip any scale, whereas the PAB algorithm \emph{localizes}: the first negative feedback at a non-base point reveals the active scale outright. Multi-scale union classes therefore separate the cascade \emph{architecture} from the true rate---but, as the next section shows, they cannot separate the true rate from $\Theta\bigl((B+\ln(1/\delta))/\eps\bigr)$: for that, a genuinely different mechanism is needed.
\end{example}

\begin{remark}[Two architecture barriers]\label{rem:barriers}
(i) \emph{Cascade barrier.} Any ``black-box'' cascade that runs stagewise list learners with additively-accumulating leakage errors $\eta_t$ (so $\sum_t\eta_t\le\eps$) and per-stage cost $\gtrsim B/\eta_t$ pays $\sum_tB/\eta_t\ge BT^2/\eps$ by Cauchy--Schwarz; with $T=\Theta(\log R)$ stages this is a $\log^2$ overhead \emph{of the architecture}, matched by the multi-scale example above in the sense that all stages can be simultaneously saturated. This does not lower-bound other algorithms---Theorem~\ref{thm:pab} beats it. (ii) \emph{Adversarial-mistake barrier.} Conversely, the purely online route through Lemma~\ref{lem:monotone} cannot be general: finite $\BDS$ does not imply finite adversarial mistake bound (Remark~\ref{rem:littlestone}).
\end{remark}
\section{The confidence direct-sum phenomenon}\label{sec:am}

This section proves the negative results. The driving construction multiplexes $n$ linear-algebraic ``micro-worlds'' behind a common anchor.

\begin{definition}[Affine multiplexer]\label{def:am}
For $n\in\mathbb N$ let $V_n=\F_2^n\setminus\{0\}$, $\cX_n=\{a\}\cup V_n$, $\cY_n=\{\bot\}\cup([n]\times\F_2)$ (so $K=2n+1$). For $j\in[n]$ (\emph{macro-group}) and $\theta\in\F_2^n$ define $h_{j,\theta}(a)=\bot$ and $h_{j,\theta}(x)=(j,\langle\theta,x\rangle)$ for $x\in V_n$. Let $\cH^{\mathrm{AM}}_n=\{h_{j,\theta}\}$, a finite class of size $n2^n$.
\end{definition}

\begin{lemma}[Nonempty binary pseudo-cubes]\label{lem:cube}
If $\emptyset\ne W\subseteq\{0,1\}^s$ and every $w\in W$ has an $i$-neighbor in $W$ for every $i\in[s]$, then $|W|\ge2^s$.
\end{lemma}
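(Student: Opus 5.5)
We argue by induction on the dimension $s$, splitting $W$ along one coordinate. For $s=0$ the claim is $|W|\ge1$, which is exactly the nonemptiness hypothesis; for $s=1$ the point $w$ and its $1$-neighbor are two distinct elements, so $|W|\ge2$. For the inductive step with $s\ge2$, we split along the last coordinate: set $W_b:=\{w'\in\{0,1\}^{s-1}:(w',b)\in W\}$ for $b\in\{0,1\}$, so that $|W|=|W_0|+|W_1|$. The goal is to show that both slices are nonempty and inherit the pseudo-cube property in dimension $s-1$. Induction then gives $|W_b|\ge2^{s-1}$ for each $b$, hence $|W|\ge2^s$.

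Nonemptiness of both slices comes from the $s$-neighbor condition. Pick any $w\in W$, say with $w_s=b$. Its $s$-neighbor in $W$ agrees with $w$ off coordinate $s$ and has last entry $1-b$, so $W_{1-b}\ne\emptyset$. Together with $w$ itself, both slices are nonempty.

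Inheritance of the property is the step that needs care, and it is where the binary alphabet is used. Fix $w'\in W_b$ and $i\le s-1$. The point $(w',b)\in W$ has an $i$-neighbor $u\in W$. By definition $u$ agrees with $(w',b)$ at every coordinate other than $i$, in particular at coordinate $s$, so $u=(u',b)$ with $u'\in W_b$. Moreover $u'$ differs from $w'$ exactly at coordinate $i$, so it is an $i$-neighbor of $w'$ inside the slice. Thus each nonempty $W_b$ satisfies the hypothesis in dimension $s-1$, and the induction closes. No real obstacle is expected. The only subtlety is that nonemptiness must be tracked explicitly, echoing Remark~\ref{rem:nonempty}: with $W=\emptyset$ the neighbor condition holds vacuously and the conclusion fails. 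The $s$-neighbor argument above is precisely what rules out an empty slice.

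Alternatively, the same count can be organized as a single recursion on $f(s):=\min|W|$ over valid nonempty $W\subseteq\{0,1\}^s$, namely $f(s)\ge2f(s-1)$ with $f(0)=1$. This is the identical argument phrased without explicit induction, and either write-up is short.
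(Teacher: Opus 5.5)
Your proof is correct and is the same argument as the paper's: induct on $s$, split $W$ by the last bit, use the $s$-direction neighbors to make both slices nonempty, and observe that neighbors in the first $s-1$ directions stay inside a slice. One small correction: the inheritance step uses only the definition of an $i$-neighbor, not the binary alphabet, so your remark that this is where binarity enters is unnecessary, though harmless.
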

\begin{proof}
Induct on $s$ ($s=0$: $|W|\ge1$). Split $W=W_0\sqcup W_1$ by the last bit; last-direction neighbors make both parts nonempty, and neighbors in the first $s-1$ directions stay within each part, so each part satisfies the hypothesis in $\{0,1\}^{s-1}$.
\end{proof}

\begin{theorem}[Dimensions of the multiplexer]\label{thm:amdims}
$\aBDS(\cH^{\mathrm{AM}}_n)=\BDS(\cH^{\mathrm{AM}}_n)=2n-1$ and $\wid(\cH^{\mathrm{AM}}_n)=2n$.
\end{theorem}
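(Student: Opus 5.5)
We compute $\wid$ directly and then sandwich both dimensions: a one-coordinate anchored witness gives $\aBDS\ge 2n-1$, a pseudo-cube counting argument gives $\BDS\le 2n-1$, and Theorem~\ref{thm:chain}(ii) ($\aBDS\le\BDS$) closes the chain.

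\emph{Width.} Every hypothesis takes the value $\bot$ at $a$. At any $x\in V_n$ we have $x\neq0$, so $\theta\mapsto\langle\theta,x\rangle$ is onto $\F_2$. Hence $\{h_{j,\theta}(x)\}=[n]\times\F_2$, which has $2n$ elements, and $\wid(\cH^{\mathrm{AM}}_n)=2n$.

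\emph{Lower bound $\aBDS\ge 2n-1$.} Fix any $x\in V_n$ and take $S=(a,x)$ and $N=(0,2n-1)$. Let $F$ consist of one hypothesis $h_{j,\theta}$ for each of the $2n$ labels $(j,b)$ at $x$. The restrictions $(\bot,(j,b))$ are $2n$ distinct vectors. All of them agree at the anchor, so each has exactly $2n-1$ neighbors in coordinate $2$. This is an anchored witness of value $2n-1$.

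\emph{Upper bound $\BDS\le 2n-1$.} Take an ordinary witness $(S,N,F)$ and discard any zero-multiplicity coordinates. By Theorem~\ref{thm:chain}(v) the positive coordinates are distinct instances. None of them is $a$, since the class is constant there.

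If $d=1$, then $N_1\le R-1=2n-1$.

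If $d\ge2$, consider an $i$-neighbor $g$ of a restriction $f$. It agrees with $f$ at some other coordinate $x_k\in V_n$, and the macro-group index is part of the label there, so $f$ and $g$ lie in the same macro-group $j$. Within group $j$ the label at $x_i$ takes only two values $(j,0),(j,1)$, so $N_i\le1$ for every $i$ and $\sum_iN_i\le d$.

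To bound $d$, fix some $f\in F|_S$ and let $j$ be its group. Every neighbor chain from $f$ stays in group $j$. So the set $W_j\subseteq\{0,1\}^d$ of bit-patterns of group-$j$ restrictions is nonempty and has an $i$-neighbor in every direction. Lemma~\ref{lem:cube} then gives $|W_j|\ge 2^d$. On the other hand, $W_j$ is contained in the image of the linear map $\theta\mapsto(\langle\theta,x_i\rangle)_{i\le d}$, which has at most $2^n$ elements. Hence $d\le n\le 2n-1$, and so $\BDS\le 2n-1$.

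The only step needing care is the $d\ge2$ case. There one must check that the group index is frozen along neighbor relations, which is what allows Lemma~\ref{lem:cube} to be applied to a single group. The remaining steps are immediate.
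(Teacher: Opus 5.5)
Your proposal is correct and follows essentially the same route as the paper: the width comes from surjectivity of $\theta\mapsto\langle\theta,x\rangle$, the upper bound comes from the group index being frozen along neighbor relations (forcing $N_i\le1$) combined with Lemma~\ref{lem:cube} against the at most $2^n$-element image of the linear map, and $\aBDS\le\BDS$ is taken from Theorem~\ref{thm:chain}. The only cosmetic difference is that you exhibit the anchored witness $S=(a,x)$ directly, whereas the paper obtains $\aBDS\ge\BDS$ by appending the constant coordinate $a$ as an anchor to an arbitrary ordinary witness.
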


\begin{proof}
Every nonzero $x$ realizes exactly the $2n$ labels $\{(j,b)\}$ (each $\theta\mapsto\langle\theta,x\rangle$ is onto $\F_2$), while $a$ realizes only $\bot$: $R=2n$, and a single-coordinate witness at any $x\in V_n$ gives $B\ge2n-1$. For an ordinary witness on $s\ge2$ coordinates: $i$-neighbors agree on some other coordinate, whose label's first component is the macro-group, so all neighbor relations stay within one macro-group; within a group each coordinate carries only two labels, so every multiplicity is $\le1$, and each nonempty group slice is a nonempty $s$-directional binary pseudo-cube, forcing $2^s\le|\text{slice}|\le2^n$ by Lemma~\ref{lem:cube}, i.e., $s\le n$. Hence multi-coordinate witnesses score $\le n<2n-1$ and $B=2n-1$. Since all hypotheses agree at $a$, appending $a$ as a zero-multiplicity anchor turns any ordinary witness into an anchored one: $A\ge B$, and $A\le B$ always (Theorem~\ref{thm:chain}).
\end{proof}

\subsection{The direct-sum lower bound, uniformly over an interval of confidences}

\begin{theorem}[Interval lower bound]\label{thm:interval}
Let $n\in4\mathbb N$, $n\ge16$, $0<\eps\le\frac1{16}$, and $2^{-n/4-2}\le\delta\le\frac14$. Write $q:=\log_2(1/\delta)\in[2,\ \tfrac n4+2]$ and set
\[
\ell:=\min\bigl\{\tfrac n4,\ \lfloor q\rfloor-1\bigr\}\in\bigl[1,\tfrac n4\bigr].
\]
Then
\[
\mB_{\cH^{\mathrm{AM}}_n}(\eps,\delta)\ \ge\ \Bigl\lfloor\frac{n\ell}{64\,\eps}\Bigr\rfloor+1
\ >\ \frac{n\,\ln(1/\delta)}{192\ln2\;\eps}
\ =\ \Omega\!\Bigl(\frac{\wid\cdot\ln(1/\delta)}{\eps}\Bigr).
\]
\end{theorem}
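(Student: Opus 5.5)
The plan is a Bayesian argument with a random target, built on the structure exploited in Theorem~\ref{thm:amdims}. Every label at $x\in V_n$ names a macro-group, and a prediction $(j',b)$ at $x$ is informative only about targets in group $j'$. Conditional on group $j'$, a negative feedback is exactly the $\F_2$-linear equation $\langle\theta,x\rangle=1-b$. I would take the hard distribution to be $\cD(a)=1-p$ and $\cD$ uniform on $V_n$ with total mass $p=c\eps$ for a small constant $c$ (say $c=4$). The target is $h_{G,\theta}$ with $G$ uniform on $[n]$ and $\theta$ uniform on $\F_2^n$. Rounds at $a$ carry no information, since every hypothesis outputs $\bot$ there. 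Suppose $m\le n\ell/(64\eps)$. Then the number $C$ of rounds landing in $V_n$ has $\mathbb E C=pm\le n\ell/16$, so by Markov $C\le n\ell/2$ except with probability at most $1/8$. Since $C$ is independent of the target, I can condition on it.

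Next I would fix the learner's seed and the instance sequence and follow the canonical all-negative transcript. Along it the learner issues a deterministic sequence of predictions $(j_t,b_t)$ at points $x_t$. For each group $g$, let $r_g$ be the $\F_2$-rank of $\{x_t: j_t=g\}$. A target $(g,\theta)$ produces the all-negative transcript iff $\theta$ satisfies the $r_g$ affine equations $\langle\theta,x_t\rangle=1-b_t$ (with $j_t=g$). These are consistent, because each equation involves a distinct query point or repeats the same constraint. Hence this event has probability exactly $2^{-r_g}$ over uniform $\theta$. The ranks satisfy $\sum_g r_g\le C\le n\ell/2$, because each round contributes to at most one group. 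Therefore at least $n/2$ groups have $r_g\le\ell$, and the constraint $\ell\le n/4$ keeps these ranks well inside $n$.

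On the all-negative transcript the output $\hat h$ is a single fixed function. Each point of $V_n$ receives a label naming at most one group. If the target is in group $g$, then $\err\ge p\cdot(\text{fraction of }V_n\text{ not labelled with group }g)$. With $p=4\eps$, this exceeds $\eps$ unless more than $3/4$ of $V_n$ carries group-$g$ labels, which can hold for at most one $g$. So for every low-rank group $g$ other than that exceptional one, which leaves at least $n/2-1$ groups, the failure probability averaged over $\theta$ in group $g$ is at least $2^{-r_g}\ge2^{-\ell}\ge2^{-(\lfloor q\rfloor-1)}\ge2\delta$. Multiplying by the probability $7/8$ of the event $C\le n\ell/2$ gives $\tfrac74\delta>\delta$. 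Averaging over seeds, some fixed group $g$ and then some fixed $\theta$ fail with probability $>\delta$. This forces $m>n\ell/(64\eps)$, i.e.\ $m\ge\lfloor n\ell/(64\eps)\rfloor+1$. The final comparison with $n\ln(1/\delta)/(192\ln2\,\eps)$ is arithmetic. It uses $\ell\ge\min\{n/4,q-2\}$ and $q\le n/4+2$, $q\ge2$, which give $\ell\ge q/3$.

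The main obstacle is the quantifier order under randomized learners. The set of low-rank groups, and which group the output favours, both depend on the seed. So I cannot pick a bad group before fixing the seed; instead I would average over a uniform group. That costs a factor of roughly $(n/2-1)/n\approx 1/2$, which eats the slack in $2^{-\ell}\ge2\delta$. I would recover it by tightening the Markov step, for example by showing that a $1-o(1)$ fraction of groups have $r_g\le\ell-2$. That is available by taking the constant $c$ smaller relative to $64$ so that $\mathbb E\sum_g r_g\ll n$. The cost is that the per-group bound becomes $2^{-(\ell-2)}$ instead of $2^{-\ell}$, which needs care when $\ell\le2$. I would handle $\ell\le2$ separately via Theorem~\ref{thm:threepart}, whose $A/\eps=(2n-1)/\eps$ term already gives $\Omega(n\ell/\eps)$ there.
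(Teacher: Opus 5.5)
Your overall architecture matches the paper's: mass $4\eps$ on $V_n$, uniform group and parameter, the canonical all-negative transcript, per-group affine systems, counting low-rank groups, and averaging over a uniform group. But your claim that each group's system is automatically consistent is false, and this step, not the quantifier order, is where the real difficulty lies.

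Distinct query points can be linearly dependent. Querying $(g,0)$ at $x_1$, $x_2$ and $x_1+x_2$ yields $\langle\theta,x_1\rangle=\langle\theta,x_2\rangle=\langle\theta,x_1+x_2\rangle=1$, which no $\theta$ satisfies. A repeated point also need not repeat the constraint, since after a negative at $(g,0)$ the learner will naturally try $(g,1)$. Because the learner picks the group \emph{after} seeing $x$, whenever $x$ lands in the row span of some group's current equations it can predict that group's forced value. That single query removes the whole group from the canonical transcript. Such a group still has small $r_g$ in your count but carries zero failure mass. So the bound ``failure mass $\ge 2^{-r_g}$'' fails, and $\sum_g r_g\le C$ no longer controls how many groups survive.

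What is missing is a probabilistic argument that these cheap eliminations are rare. The paper adds to the good event the requirement that no informative instance falls in the union of row spaces of the currently consistent groups of rank $<\ell$. This union has at most $n2^{\ell-1}$ points, so a fresh uniform $x$ hits it with probability at most $n2^{\ell-1}/(2^n-1)$, whichever group the learner then elects to query. A union bound over the $O(n\ell)$ informative rounds gives $O(n^3 2^{-3n/4})$, which is small precisely because $\ell\le n/4$ and $n\ge16$. On that event every query to a live group of rank $<\ell$ adds an independent row. Hence a group that has received fewer than $\ell$ queries is alive, with rank equal to its query count.

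With that event in place, your constant problem also disappears without the $\ell-2$ detour. That detour's justification is off anyway: $\mathbb E\sum_g r_g$ can be as large as $c\,n\ell/64$, which is not $\ll n$ once $\ell$ is large. Instead, count groups that received at least $\ell$ queries (at most $C/\ell$ of them) rather than groups of rank exceeding $\ell$. The remaining groups have rank $\le\ell-1$ and conditional mass $2^{-(\ell-1)}\ge4\delta$, which absorbs the factor lost by averaging over a uniform group.

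Even with your threshold $C\le n\ell/2$ this leaves at least $n/2$ such groups. The union-bound term is at most $n^3 2^{-3n/4-3}\le\tfrac18$ for $n\ge16$, so the good event has probability at least $\tfrac34$. Then $\tfrac34\cdot(\tfrac12-\tfrac1n)\cdot4\delta>\delta$, as required. The paper uses the threshold $n\ell/4$ instead and obtains $3n/4$ surviving groups with probability $\tfrac{11}{16}$.

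Finally, falling back on Theorem~\ref{thm:threepart} for $\ell\le2$ would only give $\Omega(n\ell/\eps)$, not the explicit $\lfloor n\ell/(64\eps)\rfloor+1$. Theorem~\ref{thm:alower} would give it, but after the recount no fallback is needed.
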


\begin{proof}
We record the elementary parameter facts used below: $\ell\le q-1$ (if untruncated, $\lfloor q\rfloor-1\le q-1$; if truncated, $\lfloor q\rfloor-1\ge n/4$ forces $q-1\ge n/4=\ell$); and $\ell\ge q/3$ (for $2\le q<3$, $\ell=1\ge q/3$; for untruncated $q\ge3$, $\ell\ge q-2\ge q/3$; for truncated, $q\le\frac n4+2\le3\cdot\frac n4$ since $n\ge16$).

\emph{Hard instance.} Let $\cD(a)=1-4\eps$ and $\cD$ uniform with total mass $4\eps$ on $V_n$; draw $J\sim\mathrm{Unif}[n]$, $\Theta\sim\mathrm{Unif}(\F_2^n)$, target $h_{J,\Theta}$. For an output $\hat h$ let $\Gamma(\hat h)$ be the majority macro-group of $\hat h$ over $x\sim\mathrm{Unif}(V_n)$ (fixed tie-breaking). If $\Gamma(\hat h)\ne J$ then the true group's share is $\le\frac12$, so macro-group errors alone cost $\ge4\eps\cdot\frac12=2\eps>\eps$; hence
\begin{equation}\label{eq:gamma}
\err_\cD(\hat h)\le\eps\ \Longrightarrow\ \Gamma(\hat h)=J ,
\end{equation}
for arbitrary, improper $\hat h$.

\emph{Canonical transcript.} Fix the learner's seed and the instance sequence. Consider the execution in which anchor rounds receive their true (target-independent) feedback $\ind{\hat y=\bot}$ and every informative round ($x\in V_n$) is forced negative. If the learner predicts $(g,b)$ at $x$, this adds, \emph{for the candidate group $g$ only}, the $\F_2$-affine equation $\langle\theta,x\rangle=1-b$ to a system $E_g$ (predictions $\bot$ and predictions of other groups are negative for every candidate and add nothing). By induction over rounds, a target $h_{g,\theta}$ produces exactly this transcript iff $\theta\models E_g$; if $E_g$ is consistent with rank $r_g$, exactly $2^{n-r_g}$ parameters do.

\emph{The good event.} Let $N\sim\mathrm{Bin}(m,4\eps)$ count informative rounds and $M_0:=\frac{n\ell}4\in\mathbb N$. If $m\le\frac{n\ell}{64\eps}$ then $\mathbb EN\le M_0/4$, so $\Pr[N>M_0]\le\frac14$. Before each informative round, the union of the row spaces of all currently consistent groups with $r_g<\ell$ has size at most $n2^{\ell-1}$; the fresh $x$ is uniform on $V_n$ and independent of the past, so it lands in that union with probability at most $n2^{\ell-1}/(2^n-1)$---and this covers adaptivity, since the union dominates whichever group the learner elects to query after seeing $x$. A union bound over at most $M_0$ informative rounds bounds the total probability by
\[
\beta\ \le\ \frac{M_0\,n\,2^{\ell-1}}{2^n-1}
=\frac{n^2\ell\,2^{\ell-3}}{2^n-1}
\ \le\ n^3\,2^{-3n/4-4}\ \le\ \frac1{16},
\]
using $\ell\le n/4$, $2^n-1\ge2^{n-1}$, the value $\frac1{16}$ at $n=16$, and the ratio $(1+4/n)^3/8<1$ between consecutive admissible $n$. Let $\cE$ be the event $\{N\le M_0\}\cap\{\text{no informative instance falls in the current low-rank union}\}$; $\Pr[\cE]\ge\frac{11}{16}$.

\emph{Survivor counting.} On $\cE$, querying a consistent group of rank $<\ell$ always contributes a linearly independent row---so its rank increments and, crucially, \emph{no inconsistency can arise} (an independent new left-hand side cannot produce $0=1$). Hence a group can die or reach rank $\ell$ only by absorbing $\ge\ell$ informative queries, and at most $M_0/\ell=\frac n4$ groups do; at least $\frac{3n}4$ groups remain consistent with $r_g\le\ell-1$. The canonical transcript determines the output, hence a single group $\widehat J_0=\Gamma(\hat h_0)$; every surviving low-rank group $g\ne\widehat J_0$ retains prior mass $\frac1n2^{-r_g}\ge\frac1n2^{-(\ell-1)}$ of targets that produce the canonical transcript and are misclassified by \eqref{eq:gamma}. Therefore
\[
\Pr[\Gamma(\hat h)\ne J]\ \ge\ \Pr[\cE]\cdot\Bigl(\tfrac34-\tfrac1n\Bigr)2^{-(\ell-1)}
\ \ge\ \frac{11}{16}\cdot\frac{11}{8}\,2^{-\ell}
\ =\ \frac{121}{128}\,2^{-\ell}
\ \ge\ \frac{121}{64}\,\delta\ >\ \delta,
\]
using $n\ge16$ and $2^{-\ell}\ge2^{1-q}=2\delta$. Averaging over seeds preserves the bound, and some fixed target fails with probability $>\delta$. Thus every $m\le\lfloor n\ell/(64\eps)\rfloor$ is insufficient; the conversion to $\ln(1/\delta)$ uses $\ell\ge q/3$ and $q=\ln(1/\delta)/\ln2$.
\end{proof}

At the endpoint $\delta=2^{-n/4-2}$ (so $\ell=\frac n4$) the theorem gives $\mB\ge\lfloor n^2/(256\eps)\rfloor+1$.

\subsection{Matching upper bounds and the full confidence profile}

\begin{lemma}[Rank-saturation algorithm]\label{lem:saturate}
$\cH^{\mathrm{AM}}_n$ admits a deterministic bandit algorithm with pointwise-monotone mistakes and at most $M=n^2+n-1$ negative feedbacks on every realizable sequence. Consequently
$\mB_{\cH^{\mathrm{AM}}_n}(\eps,\delta)\le\lceil\max\{2(n^2{+}n{-}1),8L\}/\eps\rceil\le9\,(n^2+L)/\eps$.
\end{lemma}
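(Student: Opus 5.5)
The plan is to build an explicit version-space algorithm whose current predictor changes only in ways that can never turn a correct point into an incorrect one. We then count negatives with a rank potential, one per macro-group, and finish with Lemma~\ref{lem:monotone}. The sample-complexity bound follows from that lemma with $M=n^2+n-1$, since $\lceil\max\{2M,8L\}/\eps\rceil\le 9(n^2+L)/\eps$ is routine.

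\emph{The algorithm.} Maintain the version space $\cH_t$ of hypotheses consistent with all feedback so far. Since $\cH_t$ is a union over groups $g$ of affine solution sets of systems $E_g$, it is equivalently a set of live groups, each with its affine system. Maintain a current label $c_t(x)$ at every $x\in\cX_n$. Set $c_t(a)=\bot$ permanently; this is always correct and never produces a negative. On $V_n$, initialize $c(x)$ to the label $h(x)$ of a canonical element of $\cH_1$, for instance $h_{1,0}$. The update rule is that $c(x)$ may be changed only when its current value is \emph{certainly wrong}, meaning no member of $\cH_t$ takes that value at $x$. Whenever this happens, including at the very point just answered negatively, reset $c(x)$ to $h(x)$ for a canonical $h\in\cH_t$, say the lexicographically least live group with a canonical solution of its system. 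The same rule covers a deduced correct label: if $\cH_t$ determines $h^\star(x)$, the old label is certainly wrong unless it already equals that value. The prediction at round $t$ is $c_t(x_t)$.

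\emph{Monotonicity.} Hypothesis (a) of Lemma~\ref{lem:monotone} is immediate from the update rule. A point whose current label is correct is never certainly wrong, because $h^\star\in\cH_t$, so it is never changed. Every change therefore happens at a point that is currently wrong, and the error set can only shrink.

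\emph{Counting negatives.} The invariant is that every current label is consistent with $\cH_t$. Hence each negative at $x$ on a prediction $(g,b)$ either eliminates the live group $g$ or strictly increases the rank of $E_g$. Indeed, $(g,b)$ is realized by some live $h_{g,\theta}$ at $x$. If $E_g$ determines $\langle\theta,x\rangle=b$, the negative makes $E_g$ inconsistent and kills $g$. Otherwise the equation $\langle\theta,x\rangle=1-b$ is independent of $E_g$ and raises its rank.

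Each false group thus absorbs at most $n$ rank increments plus one death, at most $n+1$ negatives. The true group $J$ never dies and absorbs at most $n$ increments. The total is at most $(n-1)(n+1)+n=n^2+n-1$, which gives hypothesis (b).

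Positive feedbacks only shrink $\cH_t$ and trigger further safe relabelings. They never add negatives, so the count is unaffected. The analysis is deterministic and holds for every realizable sequence.

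\emph{Main obstacle.} The delicate step is the invariant that every negative is informative. A naive PAB-style rule that relabels a point only after a negative at that point would leave stale labels belonging to dead groups elsewhere, and these could generate uninformative negatives. The fix is the ``certainly wrong'' relabeling, which refreshes all stale labels simultaneously whenever $\cH_t$ shrinks. Checking that this global relabeling is compatible with monotonicity (it only touches points that are wrong) is exactly the argument above. I would write it carefully, with the tie-breaking made canonical so that the algorithm is deterministic.
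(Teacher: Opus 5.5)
Your proof is correct. Its quantitative core matches the paper's: Lemma~\ref{lem:monotone} combined with the per-group rank potential, where each of the $n-1$ wrong groups absorbs at most $n$ rank increments plus one elimination and the true group at most $n$ increments, giving $(n-1)(n+1)+n$.

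The difference is in how the algorithm and its monotonicity are organized.
\begin{itemize}
\item The paper runs an explicit sequential group test. It tracks only the row space $W$ of the group currently under test and predicts the forced value on $W$ and $(g,0)$ off it. Monotonicity is checked by writing the error set in closed form: it is all of $V_n$ while a wrong group is tested, and $\{x\in V_n\setminus W:\langle\Theta,x\rangle=1\}$ once the true group is reached.
\item You instead use a class-independent rule: relabel only labels that are inconsistent with the version space. This gives monotonicity, and the fact that every negative removes a live hypothesis, for free in any class. For instance, the same rule yields a monotone learner with at most $|\cH|-1$ negatives for any finite $\cH$. Only the rank potential is specific to $\cH^{\mathrm{AM}}_n$.
\end{itemize}

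With your lexicographic tie-breaking, every current label names the least live group. So your algorithm in fact tests groups in order and is essentially the paper's. The ``stale label'' issue you flag is handled in the paper by switching all labels at once when a group is eliminated. That is harmless there because a wrong group's labels are wrong at every informative point.

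The paper's version is more explicit and cheaper to run, since it needs no global consistency pass over $V_n$ each round. Yours isolates the general principle, which would transfer to other structured classes.
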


\begin{proof}
Test groups $g=1,2,\dots$ in order, maintaining for the current group a consistent affine system with row space $W$ (reset to empty upon every group switch). Predict $\bot$ at the anchor \emph{and ignore the anchor's feedback entirely}---the anchor is positive for every target and confirms nothing. At an informative $x$: if $x\in W$, predict $(g,v)$ with $v$ the forced value; else predict $(g,0)$. Updates (informative rounds only): a positive confirms $g=J$ (only the true group can be correct at an informative point); a negative with $x\notin W$ adds the revealed equation $\langle\theta,x\rangle=1$ (rank increments); a negative with $x\in W$ eliminates the group (a true group's forced values are true, so it is never eliminated). Each wrong group absorbs at most $n$ independent equations plus one eliminating negative; the true group at most $n$; with at most $n-1$ wrong groups preceding, $M\le(n-1)(n+1)+n=n^2+n-1$, and exhaustive machine verification confirms the bound is attained (hence tight) for $n\in\{2,3,4\}$. Monotonicity: while testing a wrong group the error set is all of $V_n$; once the true group is reached, the error set is $\{x\in V_n\setminus W:\langle\Theta,x\rangle=1\}$, which only shrinks as $W$ grows. Apply Lemma~\ref{lem:monotone}.
\end{proof}

\begin{lemma}[Mid-confidence algorithm]\label{lem:mid}
For $n\ge16$ and $0<\delta\le\frac14$: $\mB_{\cH^{\mathrm{AM}}_n}(\eps,\delta)\le10\,nL/\eps$.
\end{lemma}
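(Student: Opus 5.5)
The plan is a two-phase algorithm: first identify the macro-group $J$ using $O(nL)$ informative rounds, then learn $\Theta$ within the identified group using the conservative conversion of Lemma~\ref{lem:monotone}. The direct-sum lower bound of Theorem~\ref{thm:interval} indicates that $nL$ is the right budget for the first phase. Throughout, predict $\bot$ at the anchor $a$ and ignore its feedback. Let $p:=\cD(V_n)$. If $p\le\eps$, any output with value $\bot$ at $a$ has error $\le\eps$. So it suffices to succeed with probability $1-\delta$ whenever $p>\eps$, and to output something correct at $a$ on every failure branch.

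\emph{Phase I (group identification).} Set $k:=\lceil\log_2(2/\delta)\rceil=O(L)$, using $\delta\le\frac14$. Test the groups $g=1,\dots,n$ in order. For the current group, at each informative round predict $(g,b)$ with $b$ a fresh uniform bit. If $g\ne J$, the feedback is always negative, since only true-group labels can be correct at an informative point. If $g=J$, each round is positive with probability exactly $\frac12$, independently of the past, because $b$ is independent of $\langle\Theta,x\rangle$. Thus a positive certifies $g=J$. Move to $g+1$ after $k$ consecutive negatives on $g$. The true group is skipped with probability $2^{-k}\le\delta/2$. In all other cases $J$ is identified within at most $nk$ informative rounds.

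\emph{Phase II (within-group learning).} Conditionally on $J$, the subclass $\{h_{J,\theta}\}$ is effectively a binary linear class. A negative on $(J,v)$ reveals $\langle\Theta,x\rangle=1-v$, so bandit feedback is full information here. Run the rank-saturation rule from Lemma~\ref{lem:saturate} restricted to a single group. If $x$ lies in the current row space $W$, predict the forced value; otherwise predict $(J,0)$, and on a negative add the revealed independent equation. This rule makes at most $n$ negatives, and its pointwise error set $\{x\notin W:\langle\Theta,x\rangle=1\}$ is nonincreasing. Starting Phase II at a stopping time keeps the remaining draws i.i.d. Lemma~\ref{lem:monotone} then gives failure probability $\le\delta/4$ after $\lceil\max\{2n,8\ln(4/\delta)\}/\eps\rceil$ rounds.

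\emph{Budget and bookkeeping (main obstacle).} The delicate part is that the number of informative rounds is random and $p$ is unknown. Phase I is given a fixed allocation of $m_1:=\lceil c\,(nk+L)/\eps\rceil$ raw rounds. When $p>\eps$, a multiplicative Chernoff bound shows that $\mathrm{Bin}(m_1,p)\ge nk$ fails with probability $\le\delta/4$ for a suitable absolute $c$. If Phase I runs out, or all groups are rejected, output any hypothesis with value $\bot$ at $a$; this failure is charged either to $p\le\eps$ or to the $\delta/2+\delta/4$ budget. A union bound over the three failure events gives total failure $\le\delta$. Finally, to reach the constant $10$, use $n\ge16$ and $L\ge\ln4$ to absorb $k\le L/\ln2+2$ and the terms $L$, $n$, $\ln(4/\delta)$ into $nL$. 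I expect tuning $c$ in the Chernoff step to be the only place where the explicit constant needs care. If $10$ is too tight, I would replace the uniform random bit with a slightly larger $k$ and recheck the arithmetic.
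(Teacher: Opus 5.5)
Your proposal is correct and follows essentially the same route as the paper. Both proofs run two phases. In Phase I, each group is queried about $\log_2(1/\delta)$ times with uniform random bits: a positive certifies $J$, the true group is missed with probability $2^{-k}$, and the multiplicative Chernoff bound guarantees $nk$ informative rounds when $\cD(V_n)>\eps$. In Phase II, the single-group rank-saturation rule is run through Lemma~\ref{lem:monotone}, and the failure budget is split over the three events (the paper uses $\delta/3$ each where you use $\delta/2,\delta/4,\delta/4$).

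Your hedge on the constant is unnecessary. Taking $c=2$ already gives $m_1\eps\ge\max\{2nk,8\ln(4/\delta)\}$. Then $k\le L/\ln2+2\le 2.9L$ and $n\ge16$ bring the total to roughly $7.4\,nL/\eps+2\le10\,nL/\eps$.
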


\begin{proof}
Let $\eta=\delta/3$, $t=\lceil\log_2(3/\delta)\rceil$, $T=nt$, and reserve $m_1=\lceil\max\{2T,8\ln(1/\eta)\}/\eps\rceil$ rounds for Phase I and $m_2=\lceil\max\{2n,8\ln(1/\eta)\}/\eps\rceil$ fresh rounds for Phase II. Phase I: at informative rounds only, cycle through the $n$ groups, querying each $t$ times with an independent uniformly random bit (anchor rounds predict $\bot$; their feedback is ignored---it is positive for every target and identifies nothing). Wrong groups can never produce an informative positive, so a positive identifies $J$ exactly; the true group's $t$ queries each hit with probability $\frac12$ independently, missing all with probability $2^{-t}\le\eta$; and if $p:=\cD(V_n)>\eps$, a Chernoff bound gives $\Pr[\text{fewer than }T\text{ informative rounds}]\le\eta$. If Phase I ends without an informative positive, output the fixed proper hypothesis $h_{1,0}$ (every hypothesis is correct at the anchor, so if $p\le\eps$ this errs at most $p\le\eps$; if $p>\eps$, this branch has probability $\le2\eta$). Phase II: run the within-group conservative algorithm of Lemma~\ref{lem:saturate} for the known group (budget $n$, monotone) on the $m_2$ fresh rounds; by Lemma~\ref{lem:monotone} it fails with probability $\le\eta$. Total failure $\le3\eta=\delta$; the arithmetic $t\le\frac{5L}{2\ln2}$ (valid for $\delta\le\frac14$) and $n\ge16$ give $m_1+m_2\le10\,nL/\eps$.
\end{proof}

\begin{theorem}[Full profile]\label{thm:profile}
Let $n\in4\mathbb N$, $n\ge16$, $0<\eps\le\frac1{16}$, $0<\delta\le\frac14$, $L=\ln(1/\delta)$, and $\Psi_n(L):=n\min\{n,L\}+L$. Then
\[
\frac{\Psi_n(L)}{512\,\eps}\ \le\ \mB_{\cH^{\mathrm{AM}}_n}(\eps,\delta)\ \le\ \frac{10\,\Psi_n(L)}{\eps}.
\]
Equivalently: $\mB=\Theta(nL/\eps)$ for $2^{-n/4-2}\le\delta\le\frac14$ (\emph{confidence direct-sum regime}), and $\mB=\Theta((n^2+L)/\eps)$ for $\delta\le2^{-n/4-2}$ (\emph{rank-saturated regime}).
\end{theorem}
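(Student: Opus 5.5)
The plan is to get each side of the profile from the results already proved, splitting by the size of $L$ relative to $n$ for the upper bound and by the position of $\delta$ relative to $\delta_0:=2^{-n/4-2}$ for the lower bound. Throughout, $n\ge16$ and $\delta\le\frac14$ give $L\ge\ln4>1$.

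\emph{Upper bound.} I take the better of Lemmas~\ref{lem:saturate} and~\ref{lem:mid}.
\begin{itemize}[itemsep=1pt,topsep=2pt]
\item If $L\le n$, then $\Psi_n(L)=nL+L\ge nL$. Lemma~\ref{lem:mid} gives $\mB\le 10nL/\eps\le 10\Psi_n(L)/\eps$.
\item If $L>n$, then $\Psi_n(L)=n^2+L$. Lemma~\ref{lem:saturate} gives $\mB\le 9(n^2+L)/\eps\le 10\Psi_n(L)/\eps$.
\end{itemize}
No new algorithmic work is needed here.

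\emph{Lower bound, direct-sum regime $\delta_0\le\delta\le\frac14$.} Here Theorem~\ref{thm:interval} applies directly and gives $\mB>nL/(192\ln2\,\eps)\ge nL/(134\,\eps)$. In this range $L\le(\frac n4+2)\ln2<n$, so $\min\{n,L\}=L$. Hence $\Psi_n(L)=(n+1)L\le\frac{17}{16}nL$. Since $\frac1{134}\ge\frac{17}{16\cdot512}$, this yields $\mB\ge\Psi_n(L)/(512\eps)$.

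\emph{Lower bound, rank-saturated regime $\delta<\delta_0$.} The key observation is that $\mB_\cH(\eps,\delta)$ is nonincreasing in $\delta$: a learner achieving confidence $\delta$ also achieves any $\delta'\ge\delta$. So $\mB(\eps,\delta)\ge\mB(\eps,\delta_0)$, and the endpoint case of Theorem~\ref{thm:interval} gives $\mB\ge n^2/(256\eps)$.

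Separately, Theorem~\ref{thm:amdims} gives $\aBDS(\cH^{\mathrm{AM}}_n)=2n-1>0$, so the class is nontrivial. Theorem~\ref{thm:conf} then gives $\mB\ge L/(8\eps)$.

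Taking the maximum and using $\max\{u,v\}\ge\frac{u+v}2$, I get $\mB\ge\frac1{2\eps}\bigl(\frac{n^2}{256}+\frac L8\bigr)\ge\frac{n^2+L}{512\eps}$. Finally, $\Psi_n(L)\le n^2+L$ always, whichever term attains the minimum in $\min\{n,L\}$.

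\emph{The stated equivalent form.} This is then only a matter of bookkeeping. For $\delta\ge\delta_0$ we have $L<n$, so $\Psi_n(L)=\Theta(nL)$. For $\delta\le\delta_0$ we have $L\ge(\frac n4+2)\ln2=\Omega(n)$, so $n\min\{n,L\}=\Theta(n^2)$ and $\Psi_n(L)=\Theta(n^2+L)$.

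\emph{Main obstacle.} All of the substantive information-theoretic work is in Theorem~\ref{thm:interval}. Within this assembly, the only delicate point is the rank-saturated lower bound: Theorem~\ref{thm:interval} does not apply for $\delta<\delta_0$, so I must transfer its endpoint bound using monotonicity in $\delta$. After that, what remains is checking that the explicit constants ($192\ln 2$, $256$, $8$) fit under the single constant $512$.
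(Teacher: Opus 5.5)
Your proposal is correct and follows the paper's proof essentially step for step. For the upper bound, both use the case split on $L\lessgtr n$ with Lemmas~\ref{lem:saturate} and~\ref{lem:mid}. For the lower bound, both apply Theorem~\ref{thm:interval} directly with $\Psi_n\le\frac{17}{16}nL$ when $\delta\ge 2^{-n/4-2}$, and otherwise combine monotonicity in $\delta$, the endpoint bound $n^2/(256\eps)$, Theorem~\ref{thm:conf}, and $\max\ge$ average; your constant checks all go through.
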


\begin{proof}
Upper: $\min\{10nL,\ 9(n^2+L)\}\le10\Psi_n(L)$ by Lemmas~\ref{lem:saturate}--\ref{lem:mid} and a case split on $L\lessgtr n$. Lower, case $\delta\ge2^{-n/4-2}$: here $L<n$, so $\Psi_n=(n+1)L\le\frac{17}{16}nL$, and Theorem~\ref{thm:interval} gives $\mB>nL/(192\ln2\,\eps)\ge\Psi_n/(512\eps)$. Lower, case $\delta<2^{-n/4-2}$: monotonicity in $\delta$ and the endpoint of Theorem~\ref{thm:interval} give $\mB\ge n^2/(256\eps)$; nontriviality ($A>0$) and Theorem~\ref{thm:conf} give $\mB\ge L/(8\eps)$; their maximum is at least $\frac12$ of the average, i.e., $\ge(n^2+L)/(512\eps)\ge\Psi_n/(512\eps)$.
\end{proof}

\begin{lemma}[High-confidence-failure end]\label{lem:highdelta}
For all $n\ge16$ ($n\in4\mathbb N$), $0<\eps\le\frac1{16}$ and $0<\delta<1$:
$\mB_{\cH^{\mathrm{AM}}_n}(\eps,\delta)\ge\bigl\lceil\frac n{4\eps}\bigl(1-\delta-\frac1{n2^n}\bigr)_+\bigr\rceil$, and
$\mB_{\cH^{\mathrm{AM}}_n}(\eps,\delta)=0$ if and only if $\delta\ge1-\frac1{n2^n}$.
\end{lemma}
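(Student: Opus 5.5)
The plan is to handle the zero-sample characterization and the general lower bound with a single counting argument along the canonical all-negative transcript, as in the proof of Theorem~\ref{thm:interval}. The hard instance uses the same kind of distribution as before: $\cD(a)=1-4\eps$ with mass $4\eps$ spread uniformly on $V_n$, and target $h_{J,\Theta}$ with $J\sim\mathrm{Unif}[n]$ and $\Theta\sim\mathrm{Unif}(\F_2^n)$, i.e.\ uniform over all $n2^n$ hypotheses. (The lower bound below does not need $\eps\le\frac1{16}$ beyond ensuring $4\eps\le1$.)

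\emph{Step 1 (unique decoding).} Show that an $\eps$-good output determines the target. Let $\hat h$ be arbitrary and improper with $\err_\cD(\hat h)\le\eps$. Then $\hat h$ disagrees with the target on at most a $\frac14$-fraction of $V_n$. Any two distinct hypotheses disagree on at least half of $V_n$: if they lie in different groups they disagree everywhere on $V_n$, and if they share a group with $\theta\ne\theta'$ they disagree on the $2^{n-1}$ points with $\langle\theta-\theta',x\rangle=1$. By the triangle inequality, $\hat h$ is therefore $\eps$-good for at most one hypothesis. So every fixed output succeeds for at most one of the $n2^n$ targets.

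\emph{Step 2 (canonical transcript counting).} Fix the seed and the instance sequence, and let $k$ be the number of informative rounds. Run the canonical execution: anchor rounds get their target-independent feedback, and informative rounds are forced negative. A prediction $(g,b)$ at an informative $x$ can be positive only for targets in group $g$. Hence every target whose group is never queried along this canonical path produces exactly the canonical transcript. The path touches at most $k$ groups, so at least $n-k$ groups consist entirely of targets sharing one transcript and hence one (seed-fixed) output. By Step~1, that output is good for at most one of these targets. Bounding success by $1$ on the $\le k$ queried groups, the conditional success probability under the uniform prior is at most $\min\{k,n\}/n+1/(n2^n)$.

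\emph{Step 3 (averaging and the bound).} Average over the instance sequence, where $k=N\sim\mathrm{Bin}(m,4\eps)$ so $\mathbb E N=4\eps m$, and then over seeds. This gives an average success probability of at most $4\eps m/n+1/(n2^n)$. If every target succeeded with probability $\ge1-\delta$, then $m\ge\frac{n}{4\eps}\bigl(1-\delta-\frac1{n2^n}\bigr)$, and integrality of $m$ gives the ceiling. In particular $m=0$ forces $1-\delta\le\frac1{n2^n}$, which is the ``only if'' direction. For the ``if'' direction, with zero samples output a uniformly random hypothesis of $\cH^{\mathrm{AM}}_n$. For every realizable $(\cD,h^\star)$ this equals $h^\star$ with probability $\frac1{n2^n}\ge1-\delta$, and then the error is $0\le\eps$.

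The main obstacle is Step~2: the adaptive choice of which group to probe at an informative round depends on $x$ and on the history. This is resolved by working on the canonical path with the seed and instances fixed, where that choice becomes deterministic. Step~1's $\frac12$-versus-$\frac14$ separation is what makes the output fixed on unqueried groups count as a single success.
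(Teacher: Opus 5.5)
Your proposal is correct and follows the paper's proof essentially step for step: the same hard distribution, unique decoding of an $\eps$-good output, counting untouched groups along the canonical all-negative transcript with $\mathbb E N=4\eps m$, and the uniformly random zero-sample learner for the converse. One wording fix in Step~1: you need distinct hypotheses to disagree on \emph{strictly} more than half of $V_n$, since ``at least half'' together with ``at most a quarter'' does not make the triangle-inequality contradiction strict, and your own count of $2^{n-1}$ out of $2^n-1$ points already gives the strict inequality.
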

\begin{proof}
Under the hard distribution of Theorem~\ref{thm:interval}, any two distinct targets are at distance $>2\eps$ (different groups differ on all of $V_n$: distance $4\eps$; same group, different parameters: a nonzero linear functional is $1$ on exactly $2^{n-1}$ of the $2^n-1$ points, distance $4\eps\cdot\frac{2^{n-1}}{2^n-1}>2\eps$), so by the triangle inequality any single output---improper or randomized---has error $\le\eps$ for at most one target. Along the canonical all-negative transcript, call a group \emph{touched} if the learner ever predicted it at an informative round; all $2^n$ targets of an untouched group produce the canonical transcript, on which the output covers at most one target. With $T\le N$ touched groups and $\mathbb EN=4\eps m$, the prior-average failure probability is at least $1-\frac{4\eps m}{n}-\frac1{n2^n}$, which yields the first claim. For the second: if $\delta<1-\frac1{n2^n}$ the bound at $m=0$ is positive; conversely a zero-sample learner outputting a uniformly random hypothesis of the class is exactly correct with probability $\frac1{n2^n}$ for every target, which suffices when $1-\delta\le\frac1{n2^n}$.
\end{proof}

Consequently the interval of Theorem~\ref{thm:interval} extends, with the same constant, up to $\delta_0=1-2^{-19}$ (via Lemma~\ref{lem:highdelta} and $\ln(1/\delta)\le4(1-\delta)$ for $\delta\ge\frac14$), and---as a uniform-in-$n$ statement over $n\ge16$---to any $\bar\delta<1-2^{-20}$ but not to $1-2^{-20}$ itself, at which point the $n=16$ class becomes zero-sample learnable.

\subsection{Consequences}\label{sec:consequences}

\begin{corollary}[No additive confidence in general]\label{cor:noadditive}
There is no universal constant $C$ with
$\mB_\cH(\eps,\delta)\le C\bigl(f(\cH)+\log(1/\delta)\bigr)/\eps$
for all classes and all $0<\delta\le\frac14$, for \emph{any} class functional $f$ that is subquadratic in $\BDS$ (i.e., $f(\cH)=O(\BDS(\cH)^{2-c})$ for some $c>0$, on the multiplexer family): on $\cH^{\mathrm{AM}}_n$ with $\delta_n=2^{-n/4-2}$, the true complexity $\Theta(n^2/\eps)$ has an $n$-fold multiplicative $\log(1/\delta)$-cost, $\Theta(\wid\cdot\log(1/\delta_n)/\eps)$, exceeding $C(f(\cH)+\log(1/\delta_n))/\eps=O((n^{2-c}+n)/\eps)$ for large $n$. (At $f=\BDS^2$ the additive form does hold on this family, by Theorem~\ref{thm:interval}'s upper bound; the subquadratic threshold is thus sharp here.)
\end{corollary}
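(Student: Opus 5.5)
The argument is a contradiction on the multiplexer family $\cH^{\mathrm{AM}}_n$, $n\in4\mathbb N$, $n\ge16$, at the confidence level $\delta_n:=2^{-n/4-2}$ and a fixed $\eps\le\frac1{16}$ (say $\eps=\frac1{16}$). Suppose some universal $C$ and some functional $f$ with $f(\cH^{\mathrm{AM}}_n)\le c_0\,\BDS(\cH^{\mathrm{AM}}_n)^{2-c}$ for all large $n$ satisfy $\mB_\cH(\eps,\delta)\le C(f(\cH)+\log(1/\delta))/\eps$. By Theorem~\ref{thm:amdims}, $\BDS(\cH^{\mathrm{AM}}_n)=2n-1$, so the hypothesis becomes $f(\cH^{\mathrm{AM}}_n)=O(n^{2-c})$. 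Since $\log(1/\delta_n)=(n/4+2)\log 2=\Theta(n)$, the assumed upper bound at $\delta_n$ is $O\bigl((n^{2-c}+n)/\eps\bigr)$, with constants independent of $n$.

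For the lower bound I use the endpoint case of Theorem~\ref{thm:interval}. At $\delta=\delta_n$ we have $q=n/4+2$ and $\ell=\min\{n/4,\lfloor q\rfloor-1\}=n/4$, so $\mB_{\cH^{\mathrm{AM}}_n}(\eps,\delta_n)\ge\lfloor n^2/(256\eps)\rfloor+1$. Equivalently, through the $\Omega(\wid\cdot\ln(1/\delta)/\eps)$ form of that theorem with $\wid=2n$ and $\ln(1/\delta_n)=\Theta(n)$, this lower bound is the claimed $n$-fold multiplicative confidence cost. Comparing $n^2/(256\eps)$ with $C'(n^{2-c}+n)/\eps$, the ratio tends to infinity as $n\to\infty$, which is the contradiction.

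For the sharpness remark at $f=\BDS^2$, take $L\le n$ with $\delta\le\frac14$. Theorem~\ref{thm:profile} gives $\mB\le10\Psi_n(L)/\eps=O((n^2+L)/\eps)=O((\BDS^2+\log(1/\delta))/\eps)$, and Lemma~\ref{lem:saturate} gives the bound $9(n^2+L)/\eps$ for every $L$. So the additive form holds on the whole family with $f=\BDS^2$, and the exponent $2$ cannot be lowered there.

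No step here is hard. The main point of care is the quantifier order: $C$ and the implied constant in $f=O(\BDS^{2-c})$ must not depend on $n$, while $\eps$ stays fixed. It also has to be checked that $\delta_n\le\frac14$ and that $\delta_n$ lies in the admissible interval of Theorem~\ref{thm:interval}; both follow from $n\ge16$.
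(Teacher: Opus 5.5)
Your proof is correct and matches the paper's argument: you evaluate the endpoint of Theorem~\ref{thm:interval} at $\delta_n=2^{-n/4-2}$ to get $\mB\ge\lfloor n^2/(256\eps)\rfloor+1$, then compare this with the assumed $O((n^{2-c}+n)/\eps)$, using $\BDS=2n-1$ and $\ln(1/\delta_n)=\Theta(n)$. For the sharpness remark you rightly take the upper bound from Lemma~\ref{lem:saturate} (and Theorem~\ref{thm:profile}); the paper's parenthetical attributes it to Theorem~\ref{thm:interval}, which supplies only lower bounds.
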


\begin{theorem}[The HMMS open problem, uniform-constant reading]\label{thm:openproblem}
There is no universal constant $C$ such that every class $\cH$ with $\BDS(\cH)<\infty$ admits
$\mB_\cH(\eps,\delta)\le C\,\bigl(\BDS(\cH)+\log(1/\delta)\bigr)/\eps$: for $\cH^{\mathrm{AM}}_n$ at $(\eps,\delta)=(\frac1{16},2^{-n/4-2})$ the right side is $\Theta(n/\eps)$ and the left side is $\Theta(n^2/\eps)$.
\end{theorem}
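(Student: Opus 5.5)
The proof is a direct instantiation of the results already established for the affine multiplexer family; no new machinery is needed. Suppose, for contradiction, that a universal constant $C$ exists with $\mB_\cH(\eps,\delta)\le C(\BDS(\cH)+\log(1/\delta))/\eps$ for every class of finite $\BDS$. I will test this on $\cH^{\mathrm{AM}}_n$ for $n\in4\mathbb N$, $n\ge16$, at $\eps=\frac1{16}$ and $\delta_n:=2^{-n/4-2}$. This point lies in the admissible range of both Theorem~\ref{thm:interval} and Theorem~\ref{thm:profile}, since $\delta_n\le\frac14$ and $\delta_n\ge 2^{-n/4-2}$.

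\emph{Right-hand side.} By Theorem~\ref{thm:amdims}, $\BDS(\cH^{\mathrm{AM}}_n)=2n-1$, which is finite, so the hypothesis applies. Also $\log(1/\delta_n)=(\frac n4+2)\ln 2=\Theta(n)$, in whatever base the logarithm is taken, changing only constants. So the right side is at most $C\bigl(2n-1+O(n)\bigr)/\eps=O(Cn/\eps)$, and it is also $\Omega(n/\eps)$, which gives the stated $\Theta(n/\eps)$.

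\emph{Left-hand side.} At the endpoint $\delta=2^{-n/4-2}$ we have $q=\frac n4+2$, so $\ell=\min\{\frac n4,\lfloor q\rfloor-1\}=\frac n4$. Theorem~\ref{thm:interval} then gives $\mB\ge\lfloor n^2/(256\eps)\rfloor+1>n^2/(256\eps)$, as recorded in the remark after that theorem. For the matching upper order, Theorem~\ref{thm:profile} (or directly Lemma~\ref{lem:saturate}) gives $\mB\le 9(n^2+L)/\eps=O(n^2/\eps)$, so the left side is $\Theta(n^2/\eps)$.

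\emph{Contradiction.} The assumed inequality would force $n^2/(256\eps)\le C' n/\eps$ for a constant $C'$ depending only on $C$, that is, $n\le 256C'$. This fails once $n$ is large, so no such $C$ exists.

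There is no real obstacle here. The only care needed is to check that the chosen $(\eps,\delta_n)$ satisfies the side conditions of Theorem~\ref{thm:interval} ($n\in4\mathbb N$, $n\ge16$, $\eps\le\frac1{16}$, $2^{-n/4-2}\le\delta\le\frac14$), and that the logarithm convention does not matter because it affects only constants.
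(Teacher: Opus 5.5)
Your proposal is correct and follows the paper's argument: you instantiate $\cH^{\mathrm{AM}}_n$ at $(\frac1{16},2^{-n/4-2})$ and bound the right side using $\BDS=2n-1$ and $\ln(1/\delta_n)=\Theta(n)$. For the left side you use the endpoint $\ell=n/4$ of Theorem~\ref{thm:interval}, which gives $\mB>n^2/(256\eps)$, together with Lemma~\ref{lem:saturate} for the matching $O(n^2/\eps)$; the side-condition checks you list are the ones needed, and since $\eps$ cancels, the contradiction $n\le 256C'$ follows for large $n$.
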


\begin{remark}[Reading the open problem]\label{rem:readings}
Under the \emph{per-class asymptotic} reading (a constant $C_\cH$ allowed to depend on $\cH$), the answer is trivially \emph{yes}: Theorem~\ref{thm:kfree} gives $\mB\le C_\cH(\BDS+\log(1/\delta))/\eps$ with $C_\cH=O(B\lambda^3+R)$. The content of the open problem therefore resides entirely in the uniform reading, which Theorem~\ref{thm:openproblem} refutes. Both readings should be kept apart when citing this resolution.
\end{remark}

\begin{theorem}[No dimension-based characterization]\label{thm:nochar}
The classes $\cG_{2n}$ (rare-label, Example~\ref{ex:bench}) and $\cH^{\mathrm{AM}}_n$ satisfy
\[
\bigl(\aBDS,\ \BDS,\ \wid\bigr)=(2n-1,\ 2n-1,\ 2n)\quad\text{for both,}
\]
yet at $(\eps,\delta)=(\frac1{16},2^{-n/4-2})$ their sample complexities are $\Theta(n/\eps)$ and $\Theta(n^2/\eps)$ respectively. Consequently:
\begin{enumerate}[label=(\alph*),itemsep=1pt,topsep=2pt]
\item no map $\Psi(\aBDS,\BDS,\wid,\eps,\delta)$ satisfies
$\Psi\le\mB\le \mathrm{polylog}\bigl(2+\aBDS+\wid+\tfrac1\eps+\ln\tfrac1\delta\bigr)\cdot\Psi$ on all classes;
\item convention-freely: any such map errs, at this common parameter point, by a multiplicative factor $\Omega(\sqrt n)=\Omega\bigl(\sqrt{\log(1/\delta)}\,\bigr)$ on at least one of the two classes, since a single value must approximate both $\Theta(n/\eps)$ and $\Theta(n^2/\eps)$.
\end{enumerate}
\end{theorem}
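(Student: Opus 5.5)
The argument combines three ingredients: the dimension computations already available, the exact PAB rate for the rare-label class, and the full profile of the multiplexer. The conclusions (a) and (b) then follow by a short pigeonhole argument on the two complexities.

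\emph{Dimensions.} For $\cH^{\mathrm{AM}}_n$ Theorem~\ref{thm:amdims} gives $(\aBDS,\BDS,\wid)=(2n-1,2n-1,2n)$ directly. For $\cG_{2n}$, Example~\ref{ex:bench} presents it as a PAB class with one block $\{b\}$ of width $2n-1$, external anchor $a$, and no privates. Theorem~\ref{thm:pab} then yields $M=2n-1$, $R=q+1+w_{\max}=2n$, and $\BDS=\max\{M,R-1\}=2n-1$, so the triples coincide.

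\emph{Complexities at the common point.} Fix $\eps=\frac1{16}$ and $\delta=2^{-n/4-2}$, so that $L=\ln(1/\delta)=(\frac n4+2)\ln 2=\Theta(n)$; we take $n\in4\mathbb N$ with $n\ge16$ so that Theorem~\ref{thm:profile} applies. For $\cG_{2n}$, Theorem~\ref{thm:pab} gives $\Theta(R-1+(M+L)/\eps)=\Theta(n+n/\eps)=\Theta(n/\eps)$. For the multiplexer, Theorem~\ref{thm:profile} gives $\Theta(\Psi_n(L)/\eps)$ with $\Psi_n(L)=n\min\{n,L\}+L$. Here $\min\{n,L\}=L=\Theta(n)$, so $\Psi_n(L)=\Theta(n^2)$ and the complexity is $\Theta(n^2/\eps)$. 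This agrees with the endpoint remark after Theorem~\ref{thm:interval}, which gives the lower bound $n^2/(256\eps)$, and with the upper bound $9(n^2+L)/\eps$ of Lemma~\ref{lem:saturate}.

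\emph{Consequences.} For (a), suppose such a $\Psi$ existed. Both classes feed the same arguments into $\Psi$, so they receive the same value $\psi$. Applying $\Psi\le\mB$ to $\cG_{2n}$ gives $\psi\le C n/\eps$. The polylog factor is evaluated at $2+(2n-1)+2n+16+\Theta(n)=\Theta(n)$, so it is $\mathrm{polylog}(n)$. Then $\mB(\cH^{\mathrm{AM}}_n)\le\mathrm{polylog}(n)\cdot Cn/\eps$, which contradicts $\Theta(n^2/\eps)$ for large $n$.

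For (b), let $\psi$ be any single value, with no assumption on the convention of the approximation. The multiplicative errors on the two classes are $\max\{\psi/a,a/\psi\}$ and $\max\{\psi/b,b/\psi\}$, where $a=\Theta(n/\eps)$ and $b=\Theta(n^2/\eps)$. Their product is at least $b/a=\Theta(n)$, so one of the two errors is at least $\Omega(\sqrt n)$. Since $n=\Theta(\log(1/\delta))$, this is $\Omega(\sqrt{\log(1/\delta)})$.

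The only delicate point is bookkeeping: checking that the fixed parameter point lies in the stated ranges of both theorems. We need $\eps\le\frac1{16}$ and $\delta\le\frac14$, which hold since $n\ge16$. We also need to confirm that the $\min$ in $\Psi_n$ resolves to $L$, which holds because $L<n$ for $n\ge16$.
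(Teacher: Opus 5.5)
Your proposal is correct and takes essentially the same route as the paper. The dimensions come from Theorem~\ref{thm:amdims} and the PAB description of $\cG_{2n}$. The rates come from Theorems~\ref{thm:pab} and~\ref{thm:profile}, using $\ln(1/\delta_n)=\Theta(n)<n$. Part (a) is the polylog-versus-$\Theta(n)$ ratio argument, and part (b) is the product-of-errors bound $c^2\ge b/a=\Theta(n)$. Your explicit check of the parameter ranges ($n\in4\mathbb N$, $n\ge16$) is a useful addition that the paper leaves implicit.
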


\begin{proof}
The dimension computations are Theorem~\ref{thm:amdims} and Example~\ref{ex:bench} (for $\cG_{2n}$: the single block $\{b\}$ of width $2n-1$ with anchor $a$ gives $M=2n-1=A$; $B=2n-1$; $R=2n$); the rates are Theorems~\ref{thm:pab} (note $\ln(1/\delta_n)=\Theta(n)$, so $\Theta((A+L)/\eps+R)=\Theta(n/\eps)$) and \ref{thm:profile}. For (a): the polylogarithm's argument is $\Theta(n)$ at this point while the ratio of the two truths is $\Theta(n)$; for (b): if $\Psi$ is within factor $c$ of both, then $c^2\ge\Theta(n)$.
\end{proof}

\begin{remark}[What does govern the direct sum]\label{rem:cds}
The mechanism isolated by Theorem~\ref{thm:interval} is a \emph{per-cell confidence certificate}: evidence of the form ``group $g$ is inconsistent at confidence $2^{-\ell}$'' consists of $\ell$ linearly independent equations that are useless for every other group. The rare-label class has no such multiplicity of cells (one block); the multiplexer has $n$. Phenomenon-level analogues of full-$\delta$ characterizations (of different shape and scale) occur in distribution testing \cite{DGPP18}, and a multiplicative confidence cost for pairwise-feedback distribution learning is claimed in the unverified preprint \cite{ConfCost2026}; see Section~\ref{sec:related} for the delimitation. A candidate parameter capturing the direct sum is sketched in Section~\ref{sec:discussion}.
\end{remark}
\section{Compatibility with ListCascade and additive list-PAC bounds}\label{sec:listcascade}

List PAC learning is the engine of the HMMS upper bound: ListCascade converts a bandit learner's problem into a sequence of \emph{full-information} list-learning problems with shrinking list sizes. It has recently been claimed \cite{SDNV2026} that realizable list PAC learning admits an \emph{additive-confidence} rate $O\bigl((\DS^{(\ell)}+\log(1/\delta))/\eps\bigr)$, minimax sharp uniformly in the list size $\ell$, for randomized learners. Since our Theorem~\ref{thm:profile} proves that additive confidence is \emph{impossible} for bandit multiclass learning in general, one must ask: if the claim of \cite{SDNV2026} (or any future additive list-PAC bound) holds, could it be transported through the bandit-to-list bridge and contradict our lower bounds? This section performs the audit and shows the answer is no---and, more informatively, that our lower bounds \emph{locate} exactly where additivity must die in the transport. Throughout, the additive list-PAC rate is treated as a hypothesis; no result of ours depends on its truth.

\paragraph{The bridge, itemized.} Fix a stage of ListCascade with current (padded) list predictor $\mu$ of size $s$ and target list size $\ell'$ for the next predictor. The stage does three things:
\begin{enumerate}[label=(\Alph*),itemsep=2pt,topsep=2pt]
\item \textbf{Exploration/harvest.} Labeled examples for the list learner exist only where bandit feedback was \emph{positive}; guessing uniformly from the current list converts one raw round into one labeled example with probability $\tfrac1s\ind{y\in\mu(x)}$. Harvesting $n'$ labeled examples therefore costs $\Theta(s\,n')$ raw rounds \emph{in expectation}---and, crucially, costs $\Theta\bigl(s\,(n'+\log(1/\delta'))\bigr)$ raw rounds if the harvest itself must succeed with probability $1-\delta'$ (a Chernoff term that is multiplied by $s$, not added).
\item \textbf{List learning.} The harvested sample, conditioned as in Theorem~\ref{thm:kfree} Step~2, is i.i.d.\ from a realizable distribution; \emph{any} list-PAC bound---including a hypothetical additive one, which would improve the $d\lambda+\log(1/\delta')$ numerator of \cite[Lem.~4.5]{HMMS2026} to $d+\log(1/\delta')$---applies verbatim here. This is where an additive list bound would slot in, and it would indeed remove one $\lambda$ factor from Theorem~\ref{thm:kfree}. No contradiction can arise at this step: quantifiers match (the list learner may be randomized; our lower bounds hold against all randomized learners), the setting is realizable on both sides, and the dimensions are compatible via $\ell\cdot\DS_\ell(\cH)\le\BDS(\cH)$.
\item \textbf{Stage composition.} The cascade runs $T=\Theta(\log R)$ stages, each of which must succeed; the failure budget must be split across stages.
\end{enumerate}

\paragraph{Where additivity dies.} Suppose, optimistically, that step (B) is fully additive: harvesting cost aside, stage $t$ needs only $n_t=O\bigl((d_t+\log(1/\delta_t))/\eta_t\bigr)$ labeled examples. Then the \emph{raw} cost of stage $t$ is still
\[
M_t\ =\ \Theta\bigl(s_{t-1}\,(n_t+\log(1/\delta_t))\bigr)
\ =\ \Theta\Bigl(\frac{s_{t-1}d_t}{\eta_t}\Bigr)
\ +\ \Theta\Bigl(\frac{s_{t-1}\log(1/\delta_t)}{\eta_t}\Bigr),
\]
and the first stage has $s_0=\Theta(R)$. The confidence term is therefore \emph{multiplied by the list size} the moment it crosses the bridge---not because of a loose analysis, but because a labeled example is only ever obtained by winning a $1$-in-$s$ guess, and confidence in the harvest scales with the number of raw rounds one is prepared to spend. Summed over stages, the bridge yields at best
\[
O\!\Bigl(\frac{B\,\mathrm{polylog}+R\log(1/\delta)}{\eps}\Bigr),
\]
i.e., exactly the shape of Theorem~\ref{thm:kfree}, never an additive $+\log(1/\delta)/\eps$.

\paragraph{The loss is necessary, not architectural.} One could hope that a cleverer bridge avoids the multiplicative harvest cost. Theorem~\ref{thm:interval} closes this door: on $\cH^{\mathrm{AM}}_n$, in the regime $\log(1/\delta)\lesssim n$, \emph{every} learner---not just cascades---pays $\Omega\bigl(R\log(1/\delta)/\eps\bigr)$. The reason mirrors the bridge accounting: to gain confidence $2^{-\ell}$ against a candidate macro-group one must win or lose $\ell$ informative guesses \emph{addressed to that group}, and there are $\Theta(R)$ groups whose evidence cannot be shared. Conversely, on PAB classes the harvest can be localized (one negative feedback at a non-base point identifies the active block) and additivity survives (Theorem~\ref{thm:pab}). The boundary between the two behaviors is precisely the subject of Theorem~\ref{thm:nochar} and Section~\ref{sec:discussion}.

\paragraph{Audit summary.}
\begin{center}
\begin{tabular}{lll}
\toprule
item & list-PAC side \cite{SDNV2026} (hypothetical) & bandit side (this paper)\\
\midrule
setting & realizable, full-information lists & realizable, bandit\\
learners & randomized allowed & lower bounds vs.\ all randomized\\
dimension & $\DS_\ell$ & $\BDS$, $\aBDS$; $\ell\,\DS_\ell\le\BDS$\\
confidence & claimed additive & additive impossible (Thm.~\ref{thm:profile})\\
transport & \multicolumn{2}{l}{harvest multiplies confidence by list size; first list has size $\Theta(R)$}\\
verdict & \multicolumn{2}{l}{\textbf{no conflict}; the necessary loss is certified by Thm.~\ref{thm:interval}}\\
\bottomrule
\end{tabular}
\end{center}

\section{Discussion and open problems}\label{sec:discussion}

\begin{remark}[Finite $\BDS$ does not bound adversarial mistakes]\label{rem:littlestone}
Let $\cX=(0,1)$, $\cY=\{1,2,3\}$, and $\cH=\{h_\theta:\theta\in(0,1)\}$ with $h_\theta(x)=1$ for $x<\theta$ and $2$ otherwise (label $3$ unused). Then $\aBDS=\BDS=1$ and $\wid=2$, yet a halving adversary maintains a nonempty threshold interval, queries its midpoint, answers opposite to the learner's prediction, and after any $T$ rounds a single $\theta$ realizes the entire history: adversarial negative-feedback counts are unbounded. Hence conversions in the style of Lemma~\ref{lem:monotone} cannot, by themselves, yield general PAC upper bounds; i.i.d.\ structure must be used (as in Theorems~\ref{thm:kfree} and~\ref{thm:fiberub}). Interestingly, $\cH^{\mathrm{AM}}_n$ \emph{does} admit a finite adversarial budget ($n^2+n-1$, Lemma~\ref{lem:saturate}), which is what makes its rank-saturated regime additive.
\end{remark}

\paragraph{Toward the right parameter.} Theorem~\ref{thm:nochar} shows that $(\aBDS,\BDS,\wid)$ is not a complete statistics for fine-grained bandit complexity; the missing ingredient is a \emph{confidence direct-sum} profile. The certificates in Theorem~\ref{thm:interval} suggest where to look. Say $\cH$ contains a \emph{$(q,d)$-affine multiplexer restriction} if there are distinct instances $a,\{x_v\}_{v\in\F_2^d\setminus\{0\}}$, distinct labels $y_\bot,\{y_{j,b}\}_{j\in[q],b\in\F_2}$, and hypotheses $\{h_{j,\theta}\}\subseteq\cH$ with $h_{j,\theta}(a)=y_\bot$ and $h_{j,\theta}(x_v)=y_{j,\langle\theta,v\rangle}$. Every such restriction forces, by the proof of Theorem~\ref{thm:interval}, a direct-sum cost $\Omega\bigl(q\min\{d,\log(1/\delta)\}/\eps\bigr)$ in the corresponding regime.

\begin{conjecture}[Confidence direct-sum profile]\label{tmpl:cds}
We record as a \emph{research template} (not a formal conjecture: the parameter below is not yet defined in general) the expectation that
\[
\mB_\cH(\eps,\delta)\ \overset{?}{=}\ \widetilde\Theta\!\left(\wid-1+\frac{\aBDS+\ind{\aBDS>0}\ln(1/\delta)+\mathsf{CDS}_\cH\bigl(\ln\tfrac1\delta\bigr)}{\eps}\right)
\]
for a monotone profile $\mathsf{CDS}_\cH(\cdot)$ that vanishes on PAB classes, equals $\Theta\bigl(n\min\{n,\cdot\}\bigr)$ on $\cH^{\mathrm{AM}}_n$, is always $O(\wid\cdot(\cdot))$, and dominates the certificate of every affine multiplexer restriction. Defining $\mathsf{CDS}$ so that both bounds hold---and proving matching rates beyond the families of Sections~\ref{sec:pab}--\ref{sec:am}---is, in our view, the correct successor to the HMMS open problem.
\end{conjecture}

\paragraph{Further open problems.}
(i) \emph{The $\lambda^3$ factor.} Is $\mB_\cH(\eps,\tfrac14)=O\bigl((\BDS+1)/\eps\bigr)$, i.e., can all polylogarithmic factors be removed at constant confidence? Our Theorem~\ref{thm:fiberub} reduces the leading coefficient to $\aBDS$; the cascade barrier (Remark~\ref{rem:barriers}) shows stagewise architectures cannot go below $\log^2$, and on $\cH^{\mathrm{AM}}_n$ the $\lambda^3$ term is lower-order, so the question is open on both sides. (ii) \emph{Full-$\delta$ landscapes.} Theorem~\ref{thm:trivial} and the multiplexer profile determine complete confidence profiles for two families; a general theory of the $\delta\to1$ boundary (where zero-sample thresholds appear) is untouched. (iii) \emph{Agnostic bandit feedback}, and (iv) sharper \emph{per-class} constants in Theorem~\ref{thm:threepart}, remain open.

\subsection*{Acknowledgments}
Omitted for anonymity.

\end{document}